\documentclass[11pt, a4paper, logo, onecolumn,copyright]{arxiv}

\usepackage[authoryear, sort&compress, round]{natbib}
\title{Uncertainty Quantification in LLM Agents: Foundations, Emerging Challenges, and Opportunities}

\usepackage{tabularx}
\usepackage[utf8]{inputenc} 
\usepackage[T1]{fontenc}    
\usepackage{hyperref}       
\usepackage{url}            
\usepackage{booktabs}       
\usepackage{nicefrac}       
\usepackage{microtype}      
\usepackage{amsmath}
\usepackage{graphicx}
\usepackage{multicol}
\usepackage{siunitx}
\usepackage{tcolorbox}
\tcbuselibrary{breakable, skins}
\usepackage{array}
\usepackage[nameinlink]{cleveref}
\usepackage{bbm}
\usepackage{multirow}
\usepackage{subfig}
\usepackage{soul}
\usepackage{floatrow}
\usepackage{float}
\usepackage{makecell}
\usepackage{wrapfig}
\usepackage{blindtext}
\usepackage{tablefootnote}
\usepackage{amsfonts}
\usepackage[flushleft]{threeparttable}
\usepackage{colortbl}
\usepackage{bbding}
\usepackage{lipsum}
\usepackage{amssymb}
\usepackage{pifont}
\newcommand{\cmark}{\ding{51}}%
\newcommand{\xmark}{\ding{55}}%
\theoremstyle{plain}

\newtheorem{lemma}{Lemma}

\theoremstyle{definition}
\newtheorem{definition}{Definition}

\theoremstyle{remark}

\usepackage{thm-restate}

\usepackage{xcolor}
\usepackage{colortbl}
\definecolor{lg}{gray}{0.9}
\definecolor{dg}{RGB}{0,150,0}
\definecolor{dr}{RGB}{139,0,0}
\DeclareSymbolFont{extraup}{U}{zavm}{m}{n}
\DeclareMathSymbol{\varheart}{\mathalpha}{extraup}{86}
\DeclareMathSymbol{\vardiamond}{\mathalpha}{extraup}{87}
\usepackage{xspace}
\usepackage{floatrow}

\newcommand{\draftonly}[1]{#1}
\newcommand{\eat}[1]{}

\renewcommand{\draftonly}[1]{}
\definecolor{darkgreen}{RGB}{0, 102, 0}

\crefformat{section}{\S#2#1#3}

\usepackage{amsmath,amsfonts,bm}

\def\eqref#1{equation~\ref{#1}}

\def\1{\bm{1}}

\DeclareMathAlphabet{\mathsfit}{\encodingdefault}{\sfdefault}{m}{sl}
\SetMathAlphabet{\mathsfit}{bold}{\encodingdefault}{\sfdefault}{bx}{n}

\author[$1$]{Changdae Oh}
\author[$1$]{Seongheon Park}
\author[$2$]{To Eun Kim}
\author[$1$]{Jiatong Li}
\author[$1$]{Wendi Li}
\author[$1$]{Samuel Yeh}
\author[$3$]{Xuefeng Du}
\author[$4$]{\mbox{Hamed Hassani}}
\author[$5$]{Paul Bogdan}
\author[$6$]{Dawn Song}
\author[$1$]{Sharon Li}

\affil[ 1]{University of Wisconsin--Madison}
\affil[ 2]{Carnegie Mellon University}
\affil[ 3]{Nanyang Technological University}
\affil[ 4]{\mbox{University of Pennsylvania}}
\affil[ 5]{University of Southern California}
\affil[ 6]{University of California, Berkeley}

\correspondingauthors{\{\texttt{changdae}, \texttt{sharonli}\}\texttt{@cs.wisc.edu}}
\artifacts{https://huggingface.co/datasets/changdae/tau2-uq-artifacts}
\projectpage{https://agentuq.github.io/}

\begin{abstract}
Uncertainty quantification (UQ) for large language models (LLMs) is a key building block for safety guardrails of daily LLM applications. Yet, even as LLM agents are increasingly deployed in highly complex tasks, most UQ research still centers on single-turn question-answering. We argue that \textit{UQ research must shift to realistic settings with interactive agents, and that a new principled framework for agent UQ is needed}. This paper presents three pillars to build a solid ground for future agent UQ research: (\textbf{1. Foundations}) We present the first general formulation of agent UQ that subsumes broad classes of existing UQ setups; (\textbf{2. Challenges}) We identify four technical challenges specifically tied to agentic setups---selection of uncertainty estimator, uncertainty of heterogeneous entities, modeling uncertainty dynamics in interactive systems, and lack of fine-grained benchmarks---with numerical analysis on a real-world agent benchmark, $\tau^2$-bench; (\textbf{3. Future Directions}) We conclude with noting on the practical implications of agent UQ and remaining open problems as forward-looking discussion for future explorations.
\end{abstract}

\begin{document}
\addtocontents{toc}{\protect\setcounter{tocdepth}{-1}}

\maketitle

\section{Introduction} \label{sec:intro}
\begin{figure*}
    \centering
    \vspace{-0.5em}
    \includegraphics[width=\linewidth]{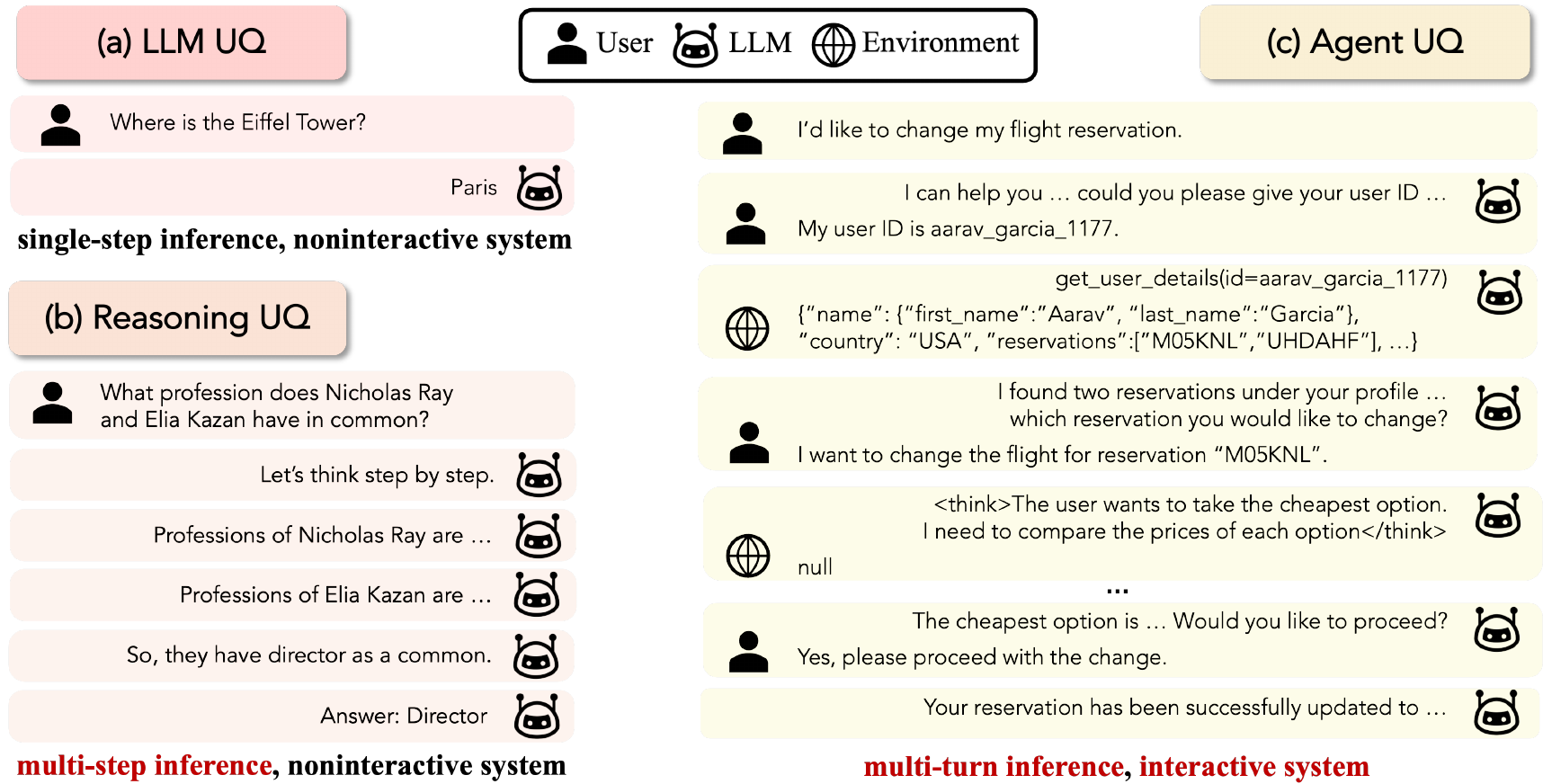}
    \caption{\textbf{Comparison between UQ setups.} Traditional LLM UQ (a) measures the uncertainty of an answer given a question, whereas the UQ for LLM reasoning (b) expands the problem by considering multi-step responses rather than a single response. Agent UQ (c) goes further by considering continual interactions between agent and user/environment across the trajectory, making it a multi-turn, interactive inference setup (example sourced from $\tau^2$-bench Airline; \cite{barres2025tau}).}
    \label{fig:illu_traj}
    \vspace{-0.5em}
\end{figure*}

LLM-based agents operating in open-world environments take actions that have real consequences: making costly bookings, modifying databases, or issuing irreversible commands~\citep{openai2025agent,google2025agent,anthropic2026cowork} at the next level of autonomy~\citep{Steinberger2026OpenClaw}. In such settings, failures are no longer limited to incorrect text generation: agents may act prematurely under unresolved ambiguity, propagate errors across long interaction trajectories, or commit to outcomes that are costly or difficult to undo. 
To be deployed responsibly, agents must be able to assess and act upon the likelihood of failure~\citep{kochenderfer2015decision}. This makes uncertainty quantification (UQ) an urgent and central requirement for agentic systems.

Despite the importance, most existing UQ research treats LLMs as static oracles: a system examined in isolation, prompted once, and evaluated by the uncertainty of a single response or a chain of responses (Figure~\ref{fig:illu_traj}, left). These methods implicitly assume a \emph{static system} in which no new information is acquired after the initial prompt~\citep{malinin2021uncertainty,kadavath2022language,farquhar2024detecting}. Under this assumption, uncertainty is treated as a point-wise estimate or uni-directional propagation process. While appropriate for single-turn inference or non-interactive multi-step reasoning, \textit{they do not meet the growing needs of UQ in agentic settings featured by its long-horizon interactive environment with heterogeneous entities.}

LLM agents in this setup operate with feedback from users and system components through multi-turn interactions to complete a complex task (Figure~\ref{fig:illu_traj}, right). For example, a flight-booking agent needs to decide whether to finalize a reservation or ask follow-up questions about dates, budget, or layover preferences. Early uncertainty should prompt information-seeking actions, while subsequent dialogue and database queries can resolve ambiguities and reduce uncertainty, enabling confident execution. This deviates significantly from the setup considered in classic LLM UQ, as the agent now spans a long interaction log updated over multiple turns that involves messages from other entities, and it also gets additional information from environments. This gap raises questions about the applicability of existing UQ methods to agentic setups.

In this paper, we argue that building a trustworthy LLM-driven agentic system requires a shift in focus: from pointwise uncertainty of the final answer to structured uncertainty dynamics in an open, interactive decision process, and a new framework is necessary for this. To initiate a formal discussion, we first \textbf{provide a concrete definition and a general formulation of agent UQ} that captures broad categories of existing UQ setups (Sec.~\ref{sec:formulation}). 
Specifically, we model the agent's problem-solving trajectory as a stochastic process over
actions, observations, and environment states, represented as a simple graphical model. Then, we define both turn-level and trajectory-level uncertainty in a simple, probabilistic expression, where we show that many prior UQ problems can be cast as special cases. 

Based on this foundation, we \textbf{identify four technical challenges in agent UQ} that call for actions from communities. (1) Selection of uncertainty estimator: while it is already known that existing uncertainty estimators have their own tradeoffs, agentic scaffolding amplifies these into severe limitations or hard constraints; (2) Uncertainty of heterogeneous entities: how can we derive estimates over the content generated by external entities having different underlying distributions?; (3) Modeling uncertainty dynamics in interactive systems: traditional weighted-average-style uncertainty aggregation may not be suitable to modeling uncertainty dynamics in the non-isolated, open environment; (4) Lack of fine-grained benchmarks: our pilot survey reveals that granular benchmarks, such as turn-level evaluation, are scarce for LLM agents.

Then, Sec.~\ref{sec:impli} discusses \textbf{practical implications} of agent UQ across diverse domains, including healthcare, software engineering, and robotics, as well as frontier LLM research on adaptive reasoning and post-training. Finally, we outline \textbf{open problems} specific to uncertainty modeling in agentic systems (Sec.~\ref{sec:open_probs}). Together, we view this work as a step toward providing the field with a necessary foundation and clearer direction for future research on uncertainty-aware, reliable agentic systems.
\begin{itemize}
\item We cast agent UQ as a distinct problem setup, and build a foundation for that by providing a concrete definition and a general formulation that expresses many existing UQ setups as special cases.
\item We crystallize the emerging challenges in agent UQ with numerical studies on a real-world benchmark $\tau^{2}$-bench~\citep{barres2025tau} with GPT-4.1~\citep{openai2025gpt41} and Kimi-K2.5~\citep{kimi2026}.
\item To further give actionable insights and promote discussion, we describe promising examples of agent UQ applications in various domains and also remaining open problems.
\end{itemize}
\section{Related Work} \label{sec:literature}

\paragraph{UQ in LLMs.} In contrast to classic UQ in machine learning~\citep{neal1992bayesian,mackay1992practical,gal2016dropout}, LLM UQ brings extra challenges from its computational cost and free-form outputs. Common approaches include aggregating output token probability~\citep{zhang2025token,marina2020unsupervised,malinin2021uncertainty,zhang2023enhancing,fadeeva2024fact,duan2024shifting,aichberger2024rethinking}, measuring consistency over multiple generations~\citep{manakul2023selfcheckgpt,farquhar2024detecting,nikitin2024kernel}, verbalizing confidence~\citep{kadavath2022language,lin2022teaching,yona2024can}, or conformal prediction~\citep{kumar2023conformal,cherian2024large,quach2024conformal}. While promising, they mostly use single-step question-answering as a testbed, which limits their practical applicability to agentic setups.

\begin{figure*}
    \centering
    \vspace{-0.5em}
    \includegraphics[width=\linewidth]{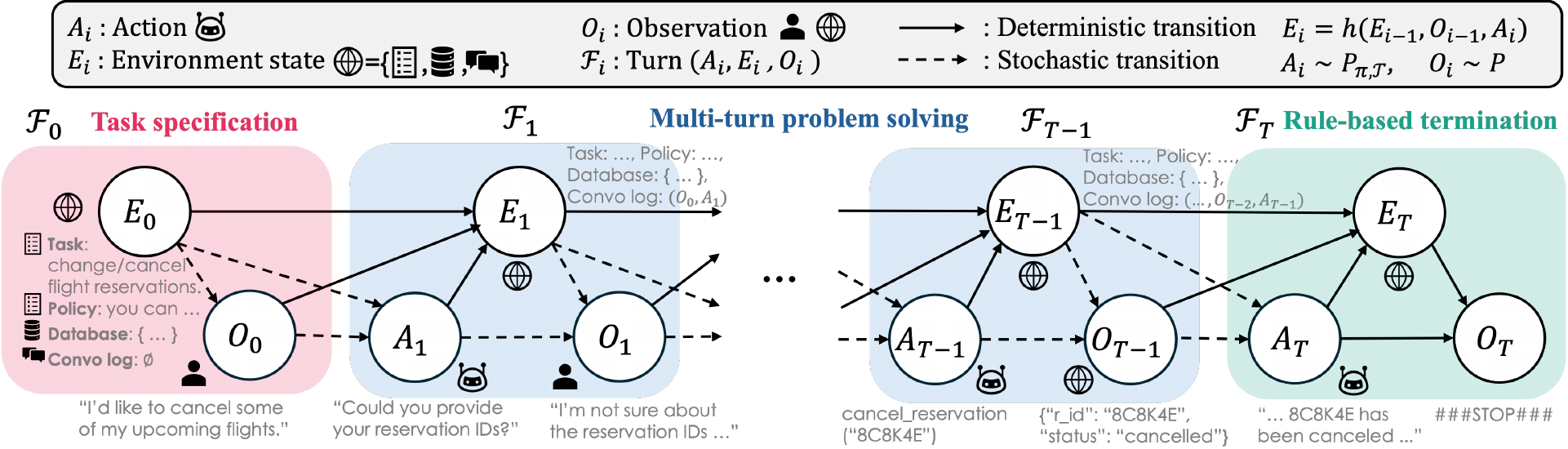}
    \caption{\textbf{Graphical model for an agent problem-solving trajectory with examples.} Given a task specification $E_0$ and an initial user query $O_0$, an agent spans a multi-turn trajectory characterized by a chain of action $A$, observation $O$, and environment state $E$. This simple abstraction describes some representative agentic prompting methods such as ReAct~\citep{yao2022react}. See Appendix~\ref{sec:apdx:formulation:prompting} for details.}
    \vspace{-0.5em}
    \label{fig:pgm}
\end{figure*}
\paragraph{UQ in reasoning and agent setups.} Beyond single-step generation, researchers show growing interest in measuring the confidence of LLMs during their reasoning. This (un)certainty information has been used to improve response quality~\citep{wang2023selfconsistency,kang2025scalable,fu2025deep} or to guide a model during inference or training~\citep {hu2024uncertainty,lightman2024lets,zhu2025uncertainty,wang2025beyond,wang2025harnessing,cheng2025reasoning}.  Recently, there have been attempts to model uncertainty in interactive or agentic inference~\citep{han2024towards,zhao2025uncertainty,duan2025uprop,frankel2024conformal,chan2025conformal,zhang2026confidence,zhang2026auq}. Although aiming to model uncertainty dynamics over long-horizon generation, they \textit{do not reflect uncertainty from different entities} (e.g., users) \textit{nor concern the reducible nature of uncertainty in open environments}~\citep{kirchhof2025position,suri2025structured}, highlighting the need for a new UQ framework under agentic scaffolding. 

\section{A General Formulation for Agent UQ} \label{sec:formulation}

\subsection{Problem Statement and Definitions} \label{sec:formulation:def}
Imagine a flight booking agent depicted in Figure~\ref{fig:illu_traj} right. 
Here, the types of \textit{\textbf{action}} $A$ and \textit{\textbf{observation}} $O$ are diverse: for $A$, there is information-seeking, questioning for clarification, calling an API function, and thinking or updating without interactive behaviors. The observation $O$, on the other hand, can be a database status report, tool-calling results, or the user's clarification. Both action and observation can be realized in flexible forms of natural language messages or structured strings, e.g., programming language. 

Concretely, given a \textbf{\emph{task specification}} $E_0$, e.g., a flight booking, a user makes a stochastic \textbf{\emph{initial query}} $O_0$, and the agent first requests the user's ID to get relevant information. Then, it calls a programmed function to get privileged information from the database. Not only does the agent interact with the user or tool, but it also reasons itself about better future actions (i.e., planning) or makes a system update without external observation. All previous interactions are stored in \textbf{\emph{environment state}} $E$ for context-grounded actions over a long horizon inference. This makes up a \textbf{\textit{trajectory}} $\mathcal{F}_{\leq t}$ (chain of $A$, $E$, and $O$) up to $t$-turn which continues until the task-specific termination rule is met, e.g., when the agent submits an answer~\citep{yang2018hotpotqa,jimenez2024swebench,wei2025browsecomp}, the user generates a stop sign~\citep{yao2024tau,barres2025tau}, or goal completion~\citep{shridhar2021alfworld} with multiple constraints~\citep{xie2024travelplanner}.

Now, to formalize UQ in this real-world agent context,
we begin with a concrete definition of the agent system in Definition~\ref{def:agent}, and introduce a graphical model for the agent trajectory in Figure~\ref{fig:pgm}. This can generalize many existing agent benchmark setups~\citep{mialon2023gaia,yao2024tau,yang2018hotpotqa,song2023llm,jimenez2024swebench,xie2024travelplanner,liu2024agentbench,wei2025browsecomp}.
\begin{tcolorbox}[width=\linewidth, colback=white!95!black, left=1mm, right=1mm, top=1mm, bottom=1mm]
\begin{definition}[Stochastic Agent System] \label{def:agent}
    Let $E_i$ be an environment state, a mixture of (1) a context memory of agent-user-tool interaction logs up to the $i$-th turn that is fully accessible to the agent, and (2) a state of the system database, which is not directly accessible to the agent---only partially observable via tool-calling. Let $O_i$ and $A_i$ be the $i$-th turn observation and action derived from distributions $P$ and $P_{\pi,\mathcal{T}}$, respectively, where $\mathcal{T}$ indicates a tool set and $\pi$ is an LLM policy. Then, under a task specification $E_0$ and the user's initial query $O_0$, we define the trajectory as a sequence of turns $\mathcal{F}_{\leq T} = \{ (A_t, E_t, O_t) \}_{t=0}^T$ where $\mathcal{F}_{0}=(E_{0},O_0)$. Given a deterministic update function $h(\cdot)$, the generative process for each turn $t$ is defined as:
    \vspace{-0.4em}
    \begin{align}
        A_i&\sim P_{\pi,\mathcal{T}}(\cdot|E_{i-1},O_{i-1}),~~O_i\sim P(\cdot|A_i,E_i),~~E_i=h(E_{i-1},O_{i-1},A_{i}).\nonumber
    \end{align}
\end{definition}
\end{tcolorbox}
To express this trajectory with a simple mathematical representation, we introduce a dynamic Bayesian network~\citep{murphy2002dynamic} in Figure~\ref{fig:pgm}. With this graphical model, dependency structures of $A$, $E$, and $O$ can be explained as follows: (1) since the environment state $E_{i-1}$ contains the entire conversation history of $A_{\leq i-1}$ and $O_{<i-1}$ up to the $i-1$ step\footnote{An adaptive memory structure~\citep{xu2025mem,xiong2025memory} can replace this ever-growing memory.}, the current action $A_i$ depends only on $E_{i-1}$ and $O_{i-1}$; (2) the current observation $O_i$ from the environment also depends only on $A_i$ and $E_i$; (3) the transition of the environment state ($E_{i-1}\rightarrow E_i$) can be defined as a deterministic function\footnote{In general, this system transition can be also stochastic, but we consider a simplified yet still realistic scenario.} $h(\cdot)$ which stacks the conversation logs up in a memory module and conducts a rule-based modification (write, update, or delete) on a previous system database state given the current action.

Under this Bayesian network, the joint probability for the trajectory can be factorized as below,
\begin{align}
    P(\mathcal{F}_{\leq T}) &= P(E_0,O_0)\Pi_{i=1}^{T} P(\mathcal{F}_{i}|\mathcal{F}_{i-1}) \nonumber \\
    &= P(E_0,O_0)\Pi_{i=1}^{T} P_{\pi,\mathcal{T}}(A_{i}|E_{i-1},O_{i-1})P(O_{i}|A_{i},E_{i}), \nonumber
\end{align}
where $\mathcal{F}_{0}=(E_0,O_0)$, and $P(E_i|\cdot)=1$ as it is determistic. Now, we define the UQ problem for this stochastic agent system as follows.

\begin{tcolorbox}[width=\linewidth, colback=white!95!black, left=1mm, right=1mm, top=1mm, bottom=1mm]
\begin{definition}[Agent UQ]\label{def:auq}
    Let $U(X)\geq0$ be an uncertainty function that takes a random variable $X$ or its distribution to produce a non-negative real value. Given a $T$-turn trajectory, agent UQ aims to estimate both a turn-level uncertainty $U(\mathcal{F}_t|\mathcal{F}_{t-1})$ for $t=1,...,T$ and a trajectory-level uncertainty $U(\mathcal{F}_{\leq T})=U(\mathcal{F}_0,...,\mathcal{F}_{T})$, as a joint total uncertainty. 
\end{definition}
\end{tcolorbox}
In the machine learning field, the most popular choices of the uncertainty function $U(X)$ are Shannon entropy~\citep{shannon1948mathematical} $\mathbb{E}[-\log P(X)]$, negative log probability $-\log P(X=x)$, and their variants. These information-theoretic measures of uncertainty have a nice property, \textit{chain-rule}~\citep{cover1999elements}, making them suitable for sequential modeling. For example, given $U(\mathcal{F}_{t}):=\mathbb{E}[-\log P(\mathcal{F}_{t})]$, we have the following uncertainty expansion, allowing us to model the agent's total uncertainty as a simple arithmetic of component-level uncertainties
\footnote{See Appendix~\ref{sec:apdx:formulation} and~\ref{sec:apdx:uncertainty} for a derivation and alternative measures, e.g., informational energy~\citep{pardo1991information} and nonextensive entropy~\citep{gell2004nonextensive}.}: 
\begin{align}
    U(\mathcal{F}_{\leq T})&=U(E_0,O_0)+\sum_{i=1}^{T} U(\mathcal{F}_{i}|\mathcal{F}_{i-1}) \nonumber \\
    &= U(E_0,O_0)+\sum_{i=1}^{T} [U(A_{i}|E_{i-1},O_{i-1})+U(O_{i}|A_{i},E_{i})],
\end{align}\label{eq:agent_uq_form}
where $U(E_0,O_0)=U(O_0|E_0)+U(E_0)$ is an initial query uncertainty plus a task specification volatility. 
\paragraph{Desideratum.} 
While previous LLM UQ setups usually focus on conditional uncertainty of a final answer, agent UQ aims at estimating \textit{joint} uncertainty over the full trajectory, where the estimate from a calibrated agent should be predictive of the reward on that trajectory. Formally, for all $r_1>r_2$,
\begin{equation}
    \mathbb{E}\big[U(\mathcal{F}_{\leq T})|r(\mathcal{F}_{\leq T})=r_1\big] < \mathbb{E}\big[U(\mathcal{F}_{\leq T})|r(\mathcal{F}_{\leq T})=r_2\big], \label{eq:agentuq_goal}
\end{equation}
\noindent where $r(\cdot)$ denotes a real-value reward function commonly set as a success-failure binary verifier.

We draw analogies between agent UQ and probabilistic Turing machines~\citep{gill1974computational}, as well as belief tracking in Partially Observable Markov Decision Processes~\citep{kaelbling1998planning}, highlighting that agent UQ is grounded by, while still having its unique edge in contrast to these existing theoretical models in terms of setups and focus. See Appendix~\ref{sec:apdx:literature} for the relevant discussion.

\subsection{A Unified View on Existing UQ Setups} \label{sec:formulation:unif}
We now show that our agent UQ formulation captures various existing UQs as special cases.
\paragraph{Reduction to single-step LLM UQ.} When $t=1$ and the action space is restricted to responses, LLM UQ can be cast as a special case
\begin{equation}
    U(\mathcal{F}_{\leq T})=U(O_0)+U(A_1|O_0) \geq U(A_1|O_0), \nonumber 
\end{equation}
where $O_0$ and $A_{1}$ denote the given question and the corresponding model answer, respectively. Most current LLM UQ literature addresses only the conditional answer uncertainty $U(A_1|O_0)$~\citep{malinin2021uncertainty,fadeeva2023lm,farquhar2024detecting}. A few also consider the question ambiguity $U(O_0)$~\citep{hou24decomposing,tomov2025the} to reflect the reality of wild user query~\citep{min2020ambigqa}. 
\paragraph{Reduction to multi-step reasoning UQ.} When the action space involves multi-step reasoning (e.g., chain-of-thought, \citet{wei2022chain}), we have
\vspace{-0.65em}
{\small
\begin{align}
    U(\mathcal{F}_{\leq T})&=U(O_0)+\sum_{i=1}^{T} U(A_i|A_{<i},O_{0}) \label{eq:lrmuq-exact} \\
    &\geq \sum_{i=1}^{T} U(A_i|A_{<i},O_{0}) \label{eq:lrmuq-noambig} \\
    &\geq \max_{i\in\{1,...,T\}} U(A_i|A_{<i},O_{0}) \label{eq:lrmuq-max} \\
    &\geq \sum_{i=1}^{T}w_i U(A_i|A_{<i},O_{0}),\label{eq:lrmuq-wsum}
\end{align}
}
where $w_i\geq 0$ and $\sum_{i=1}^{T} w_i=1$. Here, $O_0$ and $A_T$ denote the given initial query and the model's final response, and $A_i,...,A_{T-1}$ are intermediate outputs.
Similar to the single-step setup, most existing multi-step UQs~\citep{tanneru2024quantifying,tao2025revisiting} do not consider the question ambiguity $U(O_0)$, so that reduces Eq.~\ref{eq:lrmuq-exact} to Eq.~\ref{eq:lrmuq-noambig}, while a few consider it~\citep{leang2025picsar}. Meanwhile, rather than summing up all the step-wise uncertainties, some approaches consider the most uncertain one, i.e., the lowest confidence used in \citet{fu2025deep}, which can be represented by Eq.~\ref{eq:lrmuq-max}. Finally, Eq.~\ref{eq:lrmuq-wsum} expresses various weighted average methods, e.g., length-normalized uncertainty~\citep{kang2025scalable} with $w_i=\frac{1}{T}$, and tail confidence~\citep{fu2025deep} by setting $w_T=1$, and other clever weighting strategies~\citep{zhao2025uncertainty}.

\paragraph{Connection to process reward modeling.} 
We further draw a \textit{connection between this multi-turn UQ problem and the process reward modeling problem}~\citep{uesato2022solving,lightman2024lets, li2025pqm} where we assign credit (reward) for each intermediate reasoning step, and then it can be utilized to evaluate the reasoning trajectory by aggregating the rewards across steps to derive a trajectory-level reward. For example, \citet{lightman2024lets} proposed the product of the step-wise probabilities and the minimum probability between steps as the aggregation method. These are analogs to Eq.~\ref{eq:lrmuq-noambig} and Eq.~\ref{eq:lrmuq-max}, respectively.
\section{Technical Challenges in Agent UQ} 
\label{sec:challenges}

We discuss what makes agentic setups more challenging for UQ than non-agentic ones, identifying four distinct challenges. To motivate discussion, we present preliminary results on $\tau^{2}$-bench using existing UQ methods: negative log-likelihood (NLL), entropy over token probabilities (Entropy), and verbalized confidence, each aggregated over all tokens or turns generated by an agent throughout the trajectory (Appendix~\ref{sec:apdx:exp} has details).

\subsection{Selection of Uncertainty Estimator} \label{sec:challenges:measure}
\begin{wraptable}{r}{0.485\textwidth}
\scriptsize
\centering
\vspace{-0.825em}
\begin{tabular}{lccc}
\toprule
\textbf{Approach} & \textbf{Access} & \makecell{\textbf{Inference}\\\textbf{Overhead}} & \makecell{\textbf{Theoretical}\\\textbf{Ground}} \\
\midrule
Probability-based       & \xmark & Free        & \cmark \\ 
Consistency-based       & \cmark & Significant & \cmark \\ 
Verbalized Confidence   & \cmark & Neglectable & \xmark \\ \bottomrule
\end{tabular}
\vspace{-0.3em}
\captionof{table}{\textbf{Comparison between UQ approaches.} Existing UQ approaches have unique weaknesses that become severe obstacles in agentic setups.} 
\label{tab:uncertainty_tradeoff}
\vspace{-0.625em}
\end{wraptable}
So what kind of uncertainty estimator should we adopt to model the agent's uncertainty, $U(\cdot)$ in Eq.~\ref{eq:agent_uq_form}? We can start by examining three representative approaches that have been advanced in LLM UQ literature so far: output probability-based (and/or internal hidden state-based), response consistency-based, and prompting-based verbalized confidence methods. 
These approaches involve clear tradeoffs along three dimensions: accessibility, inference-time cost overhead, and theoretical grounding. Probability-based methods exhibit limited applicability due to their reliance on access to output probabilities. While consistency-based and verbalized confidence methods do not face this accessibility constraint, they incur substantial inference-time overhead from repeated generation and lack theoretical grounding, respectively.
\begin{table}[!t]
\centering
\resizebox{\textwidth}{!}{%
\begin{tabular}{lc|ccc|ccc|ccc}
\toprule
 \textbf{Scenario} & \textbf{Avg. $r$} & \multicolumn{3}{c}{\textbf{NLL}} & \multicolumn{3}{c}{\textbf{Entropy}} & \multicolumn{3}{c}{\textbf{Verb.\ Conf.}} \\
 & & AUROC & $\rho$ & $\tau$ & AUROC & $\rho$ & $\tau$ & AUROC & $\rho$ & $\tau$ \\ \midrule
GPT-4.1 on Retail & 0.509 & \textbf{0.597} & 0.169 & 0.138 & 0.580 & 0.139 & 0.114  & 0.575 & \textbf{0.179} & \textbf{0.170} \\
GPT-4.1 on Telecom & 0.517 & 0.624 & 0.214$^{*}$ & 0.176$^{*}$ & 0.611 & 0.192$^{*}$ & 0.158$^{*}$ & \textbf{0.685} & \textbf{0.330}$^{***}$ & \textbf{0.286}$^{***}$ \\
Kimi-K2.5 on Retail & 0.447 & 0.469 & -0.054 & -0.044 & 0.468 & -0.056 & -0.045 & \textbf{0.523} & 0.039 & 0.032 \\
Kimi-K2.5 on Telecom & 0.965 & 0.645 & 0.093 & 0.076 & \textbf{0.664} & \textbf{0.104} & \textbf{0.086} & 0.580 & 0.051 & 0.042 \\ \bottomrule
\vspace{-0.9em}
\caption{\textbf{Evaluation of uncertainty estimators on $\tau^{2}$-bench}. NLL and Entropy predict task failure ($1 - r$); Verbalized confidence predicts task success ($r$), where $r$ denotes the reward; $\rho$ and $\tau$ denote Spearman and Kendall rank correlation coefficient, respectively. The significance is noted as $^{*}p<0.05$ and $^{***}p<0.001$.}
\label{tab:uq_eval}
\end{tabular}}
\vspace{-1em}
\end{table}

Although this is well-established discussion in the classical single-turn QA setup, \textbf{agentic scaffold turns this into serious challenges}: (1) probability-based methods cannot be applied to most frontier LLMs that centered at challenging agent benchmarks, as they do not consistently provide probability outputs; moreover, free-form long generation makes aggregated token probabilities less informative; (2) consistency-based methods become infeasible due to their prohibitively high inference cost in long-horizon, multi-turn settings; and (3) dynamically expanding context memory of agent (i.e., $E_i$ in Eq.~\ref{eq:agent_uq_form}), often coupled with noisy observations, results in increasingly inflated and unreliable verbalized confidence. A pilot study in Table~\ref{tab:uq_eval} provides some results of probability-based and verbalized confidence methods for estimating the trajectory uncertainty $U(\mathcal{F}_{\leq T})$ with aggregated action uncertainty $\frac{1}{T}\sum_{i=1}^{T} U(A_{i})$. In many cases, the considered methods show close-to-random classifier performance in predicting the agent's success/failure. We believe that establishing a theoretically grounded verbalized confidence method is a promising future direction, given its vivid merits in agentic setups such as accessibility and minimal extra inference cost.

\subsection{Uncertainty of Heterogeneous Entities} \label{sec:challenges:obs-uq}
\begin{wraptable}{r}{0.525\textwidth}
    \centering
    \vspace{-0.95em}
    \includegraphics[width=\linewidth]{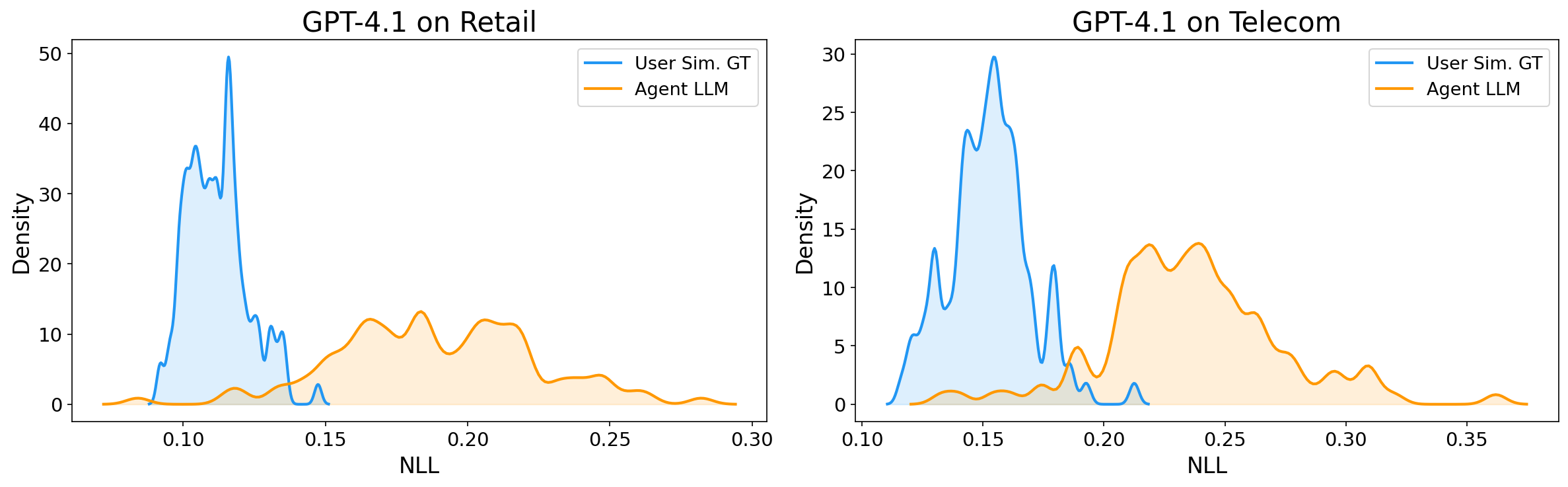}
    \vspace{-1.1em}
    \captionof{figure}{\textbf{Observation uncertainty on $\tau^{2}$-bench.} We compare distributions over observation uncertainty estimates, i.e., average NLL over all user messages, from the agent LLM and the ground truth user simulator LLM.}
    \label{fig:obs_uctt}
    \vspace{-0.85em}
\end{wraptable}
Inference of agents involves not only the actions $A_i$ from the agent itself, but more importantly, observations $O_i$ through the conversations with the user and interactions with the environment via tools. There are \textit{no trivial means to estimate the uncertainty of observations that come from these heterogeneous external entities}, departing from the epistemic uncertainty of the agent-self. 

One may argue that we can just feed the whole trajectory, including the observation strings from the user and tools, into the agent and then use its output probability $P_{\pi,\mathcal{T}}(O_i|A_i,E_i)$ to approximate the underlying true observation distribution $P(O_i|A_i,E_i)$ per turn and corresponding uncertainty $U(O_i|A_i,E_i)$. However, the distribution of human language is definitely different from that of LLMs~\citep{munozortiz2024contrasting,alex2025do}, and the distribution over the tool-calling outcome strings might be different too. To quantify this gap, we conduct a preliminary experiment on $\tau^{2}$-bench, where the user is simulated with another LLM equipped with a distinct prompt from that of the agent LLM. In Figure~\ref{fig:obs_uctt}, we compare the trajectory-level average NLL density plots driven from the ground truth user simulator and the agent LLM. We see remarkable deviations between them, stressing the need for deliberation to estimate observation uncertainty. Adopting auxiliary LLM as an approximated world model can be a possible solution, where we see some promises in Figure~\ref{fig:obs_uctt_full}.

\subsection{Modeling Uncertainty Dynamics in Interactive Systems} \label{sec:challenges:dynamics}
One of the most distinctive features of agent UQ is that the model now operates in a non-isolated environment to generate a trajectory from iterative interactions. This presents a unique challenge: modeling multi-turn cascade uncertainty in an interactive system. Here, the desired UQ method should not only model the propagation of uncertainty across the trajectory but also \emph{account for additional information gained through interactions}. To be specific, an agent in this open environment can gather extra information, make clarifications, and request confirmations at each step by interacting with the user and databases through conversations and tool-calling. These interactive actions are the driving force of goal achievement in the complex (and usually under-specified) agentic tasks in the wild~\citep{li2026thinking}. Therefore, the uncertainty model, which can reliably inform the agent's failures in this setting, should reflect \textbf{conditional uncertainty reduction through information-seeking behavior as well as the uncertainty propagation}. 

\begin{wraptable}{r}{0.525\textwidth}
    \centering
    \vspace{-0.7em}
    \includegraphics[width=\linewidth]{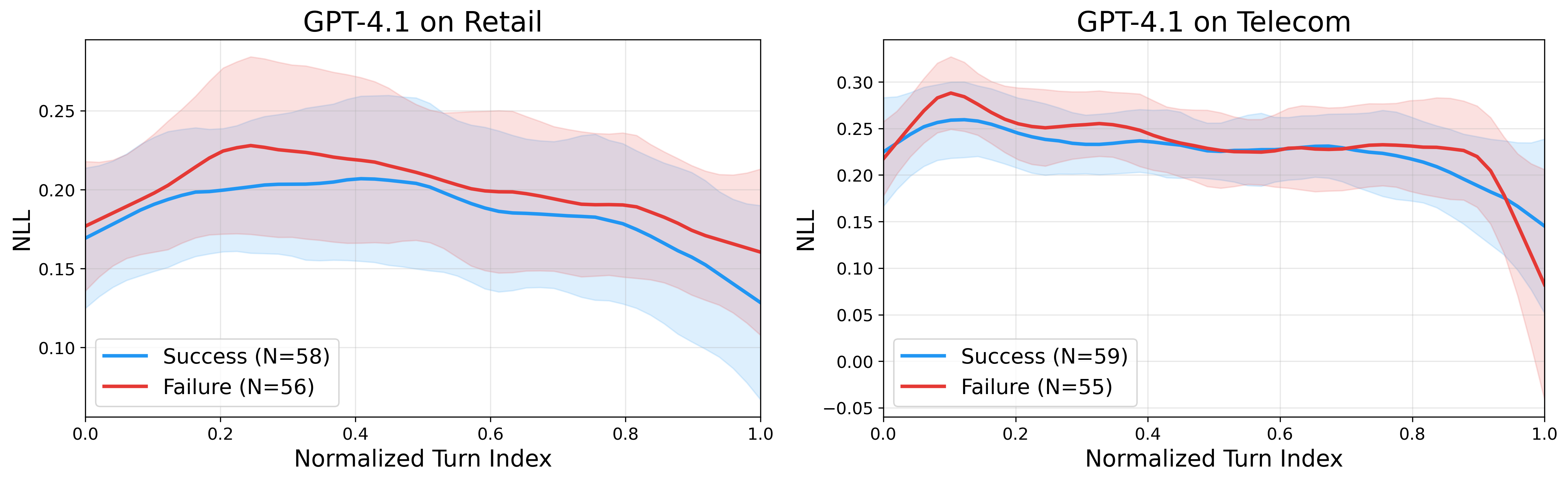}
    \vspace{-1.45em}
    \captionof{figure}{\textbf{Uncertainty evolution across the trajectory on $\tau^{2}$-bench.} We visualize the evolution of action uncertainty estimates across the normalized turn index, averaged over the entire trajectories from two groups: task success and failure.}
    \label{fig:uncertainty_evo}
    \vspace{-0.65em}
\end{wraptable}
Unfortunately, most existing multi-step UQ approaches discussed from Eq.~\ref{eq:lrmuq-exact} to Eq.~\ref{eq:lrmuq-wsum} do not consider the interactivity of actions for each turn, and only focus on uncertainty propagation, while neglecting the reducible nature of uncertainty dynamics in non-isolated, interactive systems. In Figure~\ref{fig:uncertainty_evo}, we present the uncertainty evolution of a naive averaging-based uncertainty aggregation method (Eq.~\ref{eq:lrmuq-wsum}), which is averaged over all trajectories from both success and failure groups. This simple, action-independent aggregation fails to meaningfully discern the failure group from the success group; Even the failure group achieves a sharper uncertainty decrease in the later part of trajectories in Telecom domain. This implies that the naive cascade modeling cannot robustly identify the agent's failure both in the process and at the termination of the task, indicating the necessity of a new framework, pursuing interactivity-centric action-conditional modeling. We discuss a possible recommendation on this action-conditional uncertainty dynamics model in Appendix~\ref{sec:apdx:action-conditional}.

\subsection{Lack of Fine-grained Benchmarks} \label{sec:challenges:bench}
  
\begin{minipage}[h]{0.59\textwidth}
In the meantime, similar to the process reward modeling~\citep{choudhury2025process}, annotating long-horizon agent trajectories incurs non-trivial effort and cost. The absence of fine-grained benchmarks becomes a natural bottleneck to developing a solid UQ method for agents. In this subsection, we provide a mini-survey of existing agent or multi-turn interaction benchmarks (See Appendix~\ref{sec:apdx:survey}), focusing primarily on LLM-centric agents rather than multimodal or physical ones. Specifically, we categorize 44 benchmarks into three classes depending on evaluation granularity: trajectory-level (conducted once at the end of the trajectory), milestone-level (involving several intermediate milestones or events), and turn-level (conducted at every single turn). Figure~\ref{fig:agentbench_survey} reveals that only 4 out of 44 benchmarks provide turn-level annotation, which is very scarce compared to the trajectory-level benchmarks. Besides, although the milestone-level alternative shows better population than the turn-level one, it is still lacking, resulting in three times less than the trajectory-level one.
\end{minipage}%
\hfill
\begin{minipage}[h]{0.385\textwidth}
    \centering
    \includegraphics[width=\linewidth]{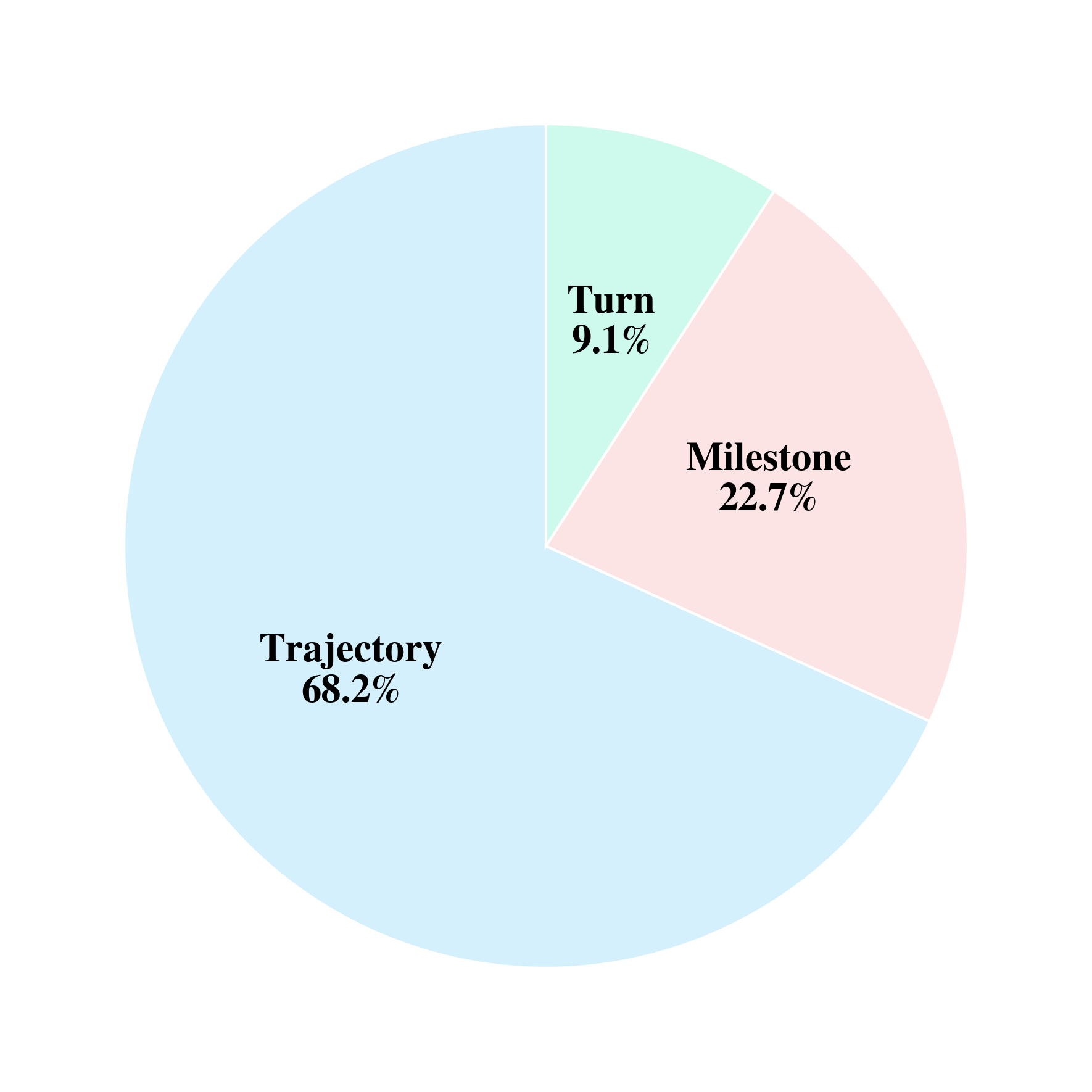}
    \vspace{-1.6em}
    \captionof{figure}{\textbf{Survey on 44 modern agent benchmarks}. The number denotes the percentage of papers in each category.}
    \label{fig:agentbench_survey}
\end{minipage}%

There are several obstacles to developing fine-grained benchmarks for agentic inference setups. First, agent benchmarks often involve very long rollouts that can easily exceed a hundred steps, making it hard to manually annotate each step. Although using frontier models as a substitute for human labeling has become increasingly common, running such models on long-horizon tasks can be prohibitively expensive for many researchers. Moreover, agent benchmarks are usually defined by complex policies; even for humans, it may take a couple of hours to fully understand the task configuration. To accelerate the development of agent UQ, we should pay greater attention to scalable dataset construction strategies for agentic inference.
\section{Practical Implications} \label{sec:impli}
Developing the agent UQ framework is not merely a theoretical exercise but a prerequisite for deploying LLM agents in non-deterministic real environments. We outline implications for \textit{frontier LLM research} and three specialized domains: \textit{healthcare}, \textit{programming}, and \textit{robotics}, in which agent UQ may have profound downstream effects, thereby incentivizing policymakers, practitioners, and researchers. 

\subsection{Advancing Frontier LLMs} \label{sec:impli:frontier}
Test-time scaling through reasoning is now a de facto standard technique to enhance the quality of LLM responses on challenging tasks~\citep{wei2022chain, muennighoff2025s1}. However, there is still huge room for improvement, e.g., mitigating overthinking and/or pursuing efficiency~\citep{sui2025stop,aggarwal2025optimalthinkingbench}. Agent UQ provides concrete guidance to develop an adaptive reasoning method, i.e., early stopping or interaction invoking, via uncertainty budgeting. In addition, multi-turn reinforcement learning has emerged as a promising yet challenging research theme in eliciting the agentic behavior and reasoning of LLMs~\citep{zhou2024archer,shani2024multi,zeng2025reinforcing}. Agent UQ helps tackle the main bottlenecks, credit assignment and when-to-explore problems~\citep{thrun1992efficient,pecka2014safe,choudhury2025process}, limited positive feedback~\citep{lee2025banel,lee2026semi}, and inspires an information pursuit policy~\citep{chattopadhyay2022interpretable} by leveraging directional and quantitative uncertainty information as feedback.

\subsection{Reliable Clinical Decision Supporting Agents} \label{sec:impli:clinic} 
Modern LLM agents offer a promising vision for healthcare, moving beyond static knowledge retrieval to autonomous systems with complex reasoning and tool-execution capabilities~\citep {wang2025survey}. However, the transition from promising prototypes to reliable clinical agents faces critical obstacles stemming from the system's inability to recognize and manage its own limitations. We believe that agent UQ can fill this gap.
For example, although~\citet{ferber2025development} observed a huge potential of tool-calling LLM agents in oncology diagnoses, the remaining error rate underscores the necessity of a ``human-in-the-loop'' workflow. Here, agent UQ can serve as a gatekeeper, e.g., flagging highly accumulated uncertainty moments to invite humans, automating during low-uncertainty periods, while pursuing total uncertainty reduction for the right final diagnosis. This aligns with the concept: \textit{adaptive healthcare} that envisions a paradigm shift from passive prediction to risk-aware active inquiry~\citep{hinostroza2025ai}, to tackle some real clinical settings such as disease stage prediction~\citep{yin2023fractional} or brain aging assessment~\citep{cheng2026structural}.

\subsection{Reliable Software Engineering Agents} \label{sec:impli:swe}
Although advanced coding agents~\citep{anthropic2025claude45systemcard,openai_gpt52codex2025} have pushed performance on standard benchmarks~\citep{jimenez2024swebench} to an impressive degree, the disparity between passing a controlled test and safely modifying a production-level codebase stresses the urgent need for a reliable agent UQ framework. 
Agent UQ enables a more controllable exploration and commitment mechanism, wherein the agentic system knows uncertainty over bug patch candidates and decides whether to gather more evidence, e.g., check more files, commit to a fix, or interact with the user. Besides, \textit{uncertainty-triggered rollback and branching algorithm}, in which an agent treats increasing uncertainty after an action as a signal to revert or branch by mirroring the checkpoint-rollback workflows of human engineers, can lead to reliable agentic coders.

\subsection{Reliable Embodied Agents in Cyber-Physical Systems} \label{sec:impli:robot}
Embodied agents in cyber-physical systems face heterogeneous uncertainty from sensing, dynamics, and human intent~\citep{li2024embodied, fung2025embodied, team2025gemini}, and they actively reduce uncertainty through interaction by re-sensing the environment or interacting with humans before committing to irreversible physical actions~\citep{xu2025compliant}. For example, a robot instructed to retrieve a fragile object may explore or seek clarification rather than execute a grasp under high uncertainty.
This makes embodied autonomy an instance of a conditional uncertainty reduction process, where information-gathering actions reduce uncertainty and state-changing actions increase commitment and risk.
By explicitly quantifying and managing uncertainty dynamics, agent UQ offers \textit{safer action selection}~\citep{zhang2025safevla}, \textit{principled delegation with minimal feedback}~\citep{hagenow2025realm}, and \textit{robust execution}~\citep{romer2025failure}.
\section{Open Problems} \label{sec:open_probs}
We have so far presented a roadmap for agent UQ with a general formulation, identified challenges, and practical implications, but important open problems remain as follows.

\paragraph{Intrinsic solution multiplicity.}  
Is high uncertainty per step due to the agent's lack of knowledge about the right action for that step? Or, is it due to the intrinsic multiplicity of valid actions in that step? We cannot identify which of them is the right source of high uncertainty! Besides, in contrast to the classic question answering, the task bid to an agent is too complex to be perfectly specified at once. Thus, each intermediate step is intrinsically ambiguous and allows multiple valid actions, making it impossible to specify the source of the high uncertainty. A concurrent work points out this multiplicity issue and approaches it through an information-theory-inspired framework~\citep{suri2025structured}. Graph-based parallelism/multiplicity modeling might also be worth exploring as a possible solution here~\citep{wu2025regular,choi2026modex}.

\paragraph{Evaluation beyond task failure.} Evaluation of predictive uncertainty has been conducted by measuring the informativeness of uncertainty estimates when predicting the incorrectness of the final answer. While suitable in single-turn QA settings, it becomes insufficient when we move to agentic LLM setups. In such setups, the sole prediction of eventual task failure collapses a rich, structured problem-solving process into a single scalar judgment, discarding useful information that makes uncertainty more actionable. We envision a new evaluation protocol that integrates dynamic solution multiplicity, task difficulty, and irreducible external ambiguity, as well as task failure, that may redefine uncertainty from a multi-facet view~\citep{guo2025uncertaintyprofiles}. This reframing calls for new metrics that credit not just predictive correlation with outcomes but the informativeness of uncertainty signals in multiple factors in agentic scaffolding, which is yet to be explored.

\paragraph{Uncertainty modeling in multi-agent systems.} Although we confine our attention to a single-agent environment, there has been notable advancement in multi-agent systems (MAS) \citep{Liang2023EncouragingDT,du2024improving,guo2024large,wu2024autogen, choi2025debatevote}. Uncertainty dynamics in this MAS can be leveraged to improve the quality of inter-agent communication~\citep{yoffe2025debunc} and mitigate debate collapse~\citep{tang2026value}, thus worth exploring. Unfortunately, while there are some works that borrow the strength of collective intelligence from MAS to improve UQ quality in a single-turn QA setup~\citep{yang2024confidence,feng2025rethinking}, no existing work in this field attempts to design a UQ method tailored for MAS. Modeling the joint uncertainty dynamics of multiple agents brings extra challenges that may require deliberation and other formal toolkits~\citep{tsallis2004nonextensive,j2016generalising,cheng2021general}.
Nonetheless, our graphical model naturally extends to coupled per-agent trajectories with a shared environment state, offering a foundation for UQ in future multi-agent systems.

\paragraph{Uncertainty modeling in self-improving agents.} In our setup, we assume that the agent only operates within a single episode. However, the agent research community has recently shown an increasing interest in self-evolving/improving agents~\citep{bai2022constitutional,shinn2023reflexion,NEURIPS2023_91edff07,wang2024voyager,jiang2025adaptation} that adapt across multiple episodes. Uncertainty information can also be utilized in this setup for reliable continual adaptation~\citep{zhang2026selaur}. However, the interactive self-evolving environment adds another non-trivial factor that shifts uncertainty dynamics as shown in~\cite{huang2025beyond}. Future exploration in this setup should reflect the non-stationary toolkit, context memory, and agent model parameters across multiple episodes.
\section{Conclusion}
We argue that LLM UQ research should move towards a more realistic setup, i.e., agentic inference, featuring the interactive long-horizon task episode, which requires a paradigm shift from a point-wise estimate to the sequential dynamics model of uncertainty. We laid the foundation for this by providing a concrete definition and general formulation of agent UQ that abstracts a wide variety of existing UQ approaches. With that, we identified four key challenges emerging in UQ on agentic scaffolding with detailed explanations and numerical analyses. We further note the future direction to explore agent UQ from the perspectives of real-world applications and open problems to provoke forward-looking discussions. We hope the three pillars provided in this work help communities design their upcoming works, products, and more.

\section*{Limitations}
Agentic systems in the wild may encounter unreliable (even adversarial; \citet{kang2025trap}) observations and a stochastically evolving environment. However, we do not dive deeper on that settings, and our formulation might not provide a proper abstraction. In that scenario, we may need to explicitly model uncertainty in the agent's evidence stream as well, e.g., trust/ignorance about tool outputs, retrieval, and user-provided facts, by leveraging formalisms, such as subjective logic \citep{jsang2018subjective,cheng2020there}, imprecise probability theory \citep{walley1991statistical}, and formal belief representation~\citep{formalbelief22}, to safely manage memory and act under unreliable observations in stochastic world.

\section*{Ethical Considerations}
This work aims to establish a foundation for agent UQ, which can directly contribute to society, science, and well-being~\citep{han2023recipe,ren2025towards,oh2025flex,hendrycks2025definition}. We believe that the realization of our proposed framework can significantly boost the reliability of the LLM agent in a dynamic, interactive inference setup, thereby facilitating risk mitigation in high-stakes decision-making. However, we also acknowledge that the proposed framework can be misused by allowing adversaries to efficiently steer the agent towards its blind spots or manipulate uncertainty thresholds to bypass safety filters~\citep{wang2025agentvigil}. Therefore, we recommend alleviations such as limiting exposure to attack-enabling signals, red-teaming, and monitoring for abuse, which avoid overpromising what low uncertainty implies.

\clearpage

\section*{Acknowledgments}
We sincerely thank Artem Shelmanov, Shawn Im, Hyeong Kyu Choi, Jongwon Jeong, Eunsu Kim, Mingyu Kim, Ayoung Lee, JungEun Kim, Hayun Lee, Hoyoon Byun, and Kyungwoo Song for their sharp feedback on the draft.
This work is supported in part by the AFOSR Young Investigator Program under
award number FA9550-23-1-0184, National Science Foundation under awards IIS-2237037 and IIS-2331669,
Office of Naval Research under grant number N00014-23- 1-2643, Schmidt Sciences Foundation, Open
Philanthropy, Alfred P. Sloan Fellowship, and gifts from Google and Amazon. Paul Bogdan acknowledges the support by the National Science Foundation (NSF) under the NSF Award 2243104 under the Center for Complex Particle Systems (COMPASS), NSF Mid-Career Advancement Award BCS-2527046, U.S. Army Research Office (ARO) under Grant No. W911NF-23-1-0111, Defense Advanced Research Projects Agency (DARPA) Young Faculty Award and DARPA Director Fellowship Award, Okawa foundation award,  National Institute of Health (NIH)
under R01 AG 079957, and Intel faculty awards.

\bibliography{main}
\clearpage
\clearpage

\appendix
\begin{center}
    \LARGE \textbf{Appendix}
    \vspace{1em}
\end{center}
\tableofcontents
\addtocontents{toc}{\protect\setcounter{tocdepth}{2}}

\section{Revision History} \label{sec:apdx:revision}
A prototype version of this paper titled "\textit{Towards Reducible Uncertainty Modeling for Reliable Large Language Model Agents}" was released on arXiv in early February 2026. It was specifically targeted at the third challenge described in the current Section~\ref{sec:challenges:dynamics} by proposing a conceptual framework, conditional uncertainty reduction process, to enable action-conditional uncertainty dynamics modeling for interactive agents, which is available now at Section~\ref{sec:apdx:action-conditional} in this Appendix. After receiving some early feedback from the community, we revised this article to cover a broad range of practical challenges emerging under agentic scaffolding, supported by quantitative analyses, rather than focusing narrowly on a single challenge and its corresponding solution. In the main body of the paper, Section~\ref{sec:challenges} has undergone a major revision, while the remaining sections were subject to only moderate or minor edits.

\section{Additional Details on Formulation} \label{sec:apdx:formulation}
\subsection{Expression of Various Agentic Prompting under Our Formulation} \label{sec:apdx:formulation:prompting}
In Section~\ref{sec:formulation:def}, we have introduced a definition of stochastic agent system (Def.~\ref{def:agent}) with a probabilistic graphical model of specific conditional dependency assumptions in Figure~\ref{fig:pgm}. Under this model, we can abstract some representative agentic prompting methods~\citep{ahn2022can,yao2022react,shinn2023reflexion,patil2025berkeley} with a unified language yet distinctive conditions as below.
\begin{enumerate}
    \item \textbf{Vanilla function calling}~\citep{patil2025berkeley}: Same as Definition~\ref{def:agent}. Stochastic process of $(A_t,E_t,O_t)$ without any extra conditions.
    \item \textbf{Few-shot prompting}~\citep{ahn2022can}: Just augmenting initial task-specification variable as $E^{'}_{0}=E_0\cup\{\mathcal{F}^{(i)}\}_{i=1}^{N}$ with some example rollouts $\mathcal{F}^{(i)}=\{(A_t,E_t,O_t)\}_{t=1}^{T^{i}}$, which may implicitly affect the action transition probability, $P_{\pi,\mathcal{T}}(\cdot)$, by conditioning.
    \item \textbf{ReAct}~\citep{yao2022react,wang2024executable}: Adding a constraint on the action sampling as {\small$A_i\sim P_{\pi,\mathcal{T}}(A^{\texttt{thk}}|E_{i-1},O_{i-1})$} if {\small$A_{i-1}\in \mathcal{A}^{\neg\texttt{thk}}$} else {\small$A_i\sim P_{\pi,\mathcal{T}}(A^{\neg\texttt{thk}}|E_{i-1},O_{i-1})$}, where {\small$A^{\texttt{thk}}\in\mathcal{A}^{\texttt{thk}}$} and {\small$A^{\neg\texttt{thk}}\in\mathcal{A}^{\neg\texttt{thk}}$} randon varaibles from the partitions of $\mathcal{A}$ standing for the space of textual reasoning and all others, respectively.
    \item \textbf{Reflexion}~\citep{shinn2023reflexion}: Do multiple trials $\mathcal{F}^{(\leq i)}=\{\mathcal{F}^{(1)},...,\mathcal{F}^{(i)}\}$ until $r^{(i)}:=r(\mathcal{F}^{(i)})=1$ or reach maximum trials, otherwise augmenting the per-trial initial state with an external long-term memory $E^{(i+1)}_0=E^{(i)}_0\cup\{\mathcal{M}(\mathcal{F}^{(j)},r^{(j)})\}_{j\leq i}$, where the long-term memory $\mathcal{M}(\cdot)$ constructs with textual feedback generated from an LLM prompted by the current trial trajectory and a corresponding binary reward. Thus, the action transition probability, $P_{\pi,\mathcal{T}}(\cdot)$, is continually evolved by reflecting feedback from historical runs. 
\end{enumerate}
Overall, this suggests that our agent formulation is general and flexible enough to capture a broad class of representative agentic prompting scenarios.

\subsection{Uncertainty Instantiations and Total Uncertainty Expansion} \label{sec:apdx:formulation:expansion}
After defining the agent UQ in Def~\ref{def:auq}, we presented a trajectory-level total uncertainty that expresses the joint uncertainty across multiple turns in an additive form of the initial query uncertainty, action uncertainty, and observation uncertainty. 

Given the joint probability, {\small$P(\mathcal{F}_{\leq T})= P(E_0,O_0)\Pi_{i=1}^{T} P_{\pi,\mathcal{T}}(A_{i}|E_{i-1},O_{i-1})P(O_{i}|A_{i},E_{i})$}, we enumerate three example instances of the uncertainty measure $U(\cdot)$ that induces the simple additive form of total uncertainty:
\begin{enumerate}
    \item \textbf{Information content} (negative log probability), {\small$U(X=x):=-\log P(X=x)$}, to measure a point-wise surprisal for a given observation.
    \item \textbf{Entropy}, {\small$U(X):=H(X)=\mathbb{E}[-\log P(X)]$}, to measure the expected amount of surprisal.
    \item \textbf{Relative entropy}, {\small$U(X):=\mathbb{D}_{\rm KL}\big(Q(X)||P(X)\big)$} with a pre-defined reference distribution {\small$Q(X)$} and Kullback-Leibler divergence $\mathbb{D}_{\rm KL}$, to measure the expected amount of surprisal given a prior knowledge (if we have any).
\end{enumerate}
For information content, it is trivial to show by just taking the negative logarithm to {\small$P(\mathcal{F}_{\leq T})$}. For the entropy and relative entropy, we have the following chain rule~\citep{cover1999elements}, 
\begin{itemize}
    \item {\small$H(\mathcal{F}_{0},...,\mathcal{F}_{T})=\sum_{i=0}^{T}H(\mathcal{F}_{i}|\mathcal{F}_{i-1},...,\mathcal{F}_{1})$},
    \item {\small$\mathbb{D}_{\rm KL}\big(Q(\mathcal{F}_{0},...,\mathcal{F}_{T})||P(\mathcal{F}_{0},...,\mathcal{F}_{T})\big)=\sum_{i=0}^T \mathbb{D}_{\rm KL}\big(Q(\mathcal{F}_{i}|\mathcal{F}_{i-1},...,\mathcal{F}_{1})||P(\mathcal{F}_{i}|\mathcal{F}_{i-1},...,\mathcal{F}_{1})\big)$},
\end{itemize}
which directly drives our uncertainty expansion for {\small$U(\mathcal{F}_{\leq T})$} in Section~\ref{sec:formulation:def}.

\section{Alternative Uncertainty Measures} \label{sec:apdx:uncertainty}
Although we have mainly focused on the above three information-theoretic uncertainty measures throughout the paper, there are plenty of alternatives one can consider depending on the problem setups. In this section, we examine some possible alternative uncertainty measures, \textit{Rényi entropy}, \textit{Tsallis entropy}, and \textit{informational energy}. These sophisticated measures may be worth investigating for the LLM agents' inference interface, characterized by long-range interactions, evolving memory, and the multifractal property.

\paragraph{Rényi entropy~\citep{renyi1961measures}} has been widely applied in modern machine learning, ecology, biodiversity science, and quantum information, among other fields. Given an order parameter {\small$0<\alpha<\infty$} and {\small$\alpha \neq 1$}, it is defined as
{\small$H_{\alpha}(X)=\frac{1}{1-\alpha}\log (\sum_{x\in\mathcal{X}} P(x)^{\alpha})$} and generalize many other entropies: max-entropy {\small$H_{0}(X)=\log|\mathcal{X}|$}, Shannon entropy {\small$\lim_{\alpha\rightarrow1} H_{\alpha}(X)=-\sum_{x\in\mathcal{X}} P(x)\log P(x)$}, collision entropy {\small$H_{2}(X)=-\log \sum_{x\in\mathcal{X}}P(x)^{2}$}, and min-entropy {\small$H_{\infty}(X)\doteq \min_{x\in\mathcal{X}}-\log P(x)$}.

\paragraph{Tsallis entropy~\citep{tsallis1988possible}} extends this further to model a complex system where the additivity property of Shannon and Rényi entropies for independent subsystems does not hold. Given an index parameter $q\in\mathbb{R}\backslash \{1\}$, it is defined as {\small$H_{q}(X)=\frac{1-\sum_{x\in\mathcal{X}}P(x)^{q}}{q-1}$}, which recover Shannon entropy when $q\rightarrow1$. Along with Rényi entropy, Tsallis entropy has taken a key role in characterizing complex physical systems as a nonextensive measure of entropy~\citep{tsallis2004nonextensive}.

\paragraph{Onicescu informational energy~\citep{pardo1991information,calin2014informational}} is another popular measure of (un)certainty that has bred many applications in quantum mechanics, economics, ecology, social sciences, and so on. It is defined as $\rm{IE}(X):=\sum_{x\in\mathcal{X}}P(x)^{2}$ which can be connected to Rényi entropy of order-2 $H_{2}(X)=-\log \sum_{x\in\mathcal{X}}P(x)^{2}=-\log \rm{IE}(X)$, as well as the power entropy \citep{vajda1968bounds,vajda2007generalized} $V_{2}(X)=1-\sum_{x\in\mathcal{X}}P(x)^{2}=1-\rm{IE(X)}$. There is also a concept, Onicescu’s correlation coefficient~\citep{calin2014informational}, $\rho(X,Y)=\frac{\sum_{x,y} P(x)P(y)}{\sqrt{\rm{IE(X)\cdot\rm{IE(Y)}}}}$, to quantify the dependency structure between multiple variables.

\section{Extended Context} \label{sec:apdx:literature}
\paragraph{Connection to the probabilistic Turing machines.} At some level of abstraction, one can view the LLM agents as probabilistic interactive Turing machines (PITMs)~\citep{john1977turing} that induce a distribution over action-observation transcripts. This connection may allow us to borrow some possible formal tools from the PITM literature to design UQ method for LLM agents. However, we note the following distinctions: (1) PITMs have an explicit randomness source and a defined distribution, whereas LLM agent also has a procedural uncertainty induced by the decoding time strategies; (2) while PITMs interact via fixed channels, LLM agents commonly interact with users and tools in partially observable or changing formats; and (3) the goal of LLM agents are often underspecified upfront and can be negotiated over turns depending on the situation, though PITMs have a concrete accept/output condition.

\paragraph{Connection to the partially observable Markov decision process.} One might draw an analogy between the agent UQ and belief tracking in Partially Observable Markov Decision Processes (POMDPs)~\citep{kaelbling1998planning}. However, the agent theme challenges the standard setups in traditional POMDPs: (1) there is no explicit belief state, but the agent carries an implicit belief in its memory; (2) actions and observations comprise language and structured strings, spanning an effectively infinite space---far from the small discrete spaces in classical POMDP; and (3) it concerns more on the agent's uncertainty, whereas POMDP concerns mostly on environment uncertainty. These connections highlight the distinctive edge of agent UQ, while grounding it in a well-established classical problem set.

\section{Action-Conditional Uncertainty Dynamics Model} \label{sec:apdx:action-conditional} 
\begin{figure*}[t]
    \centering
    \includegraphics[width=\linewidth]{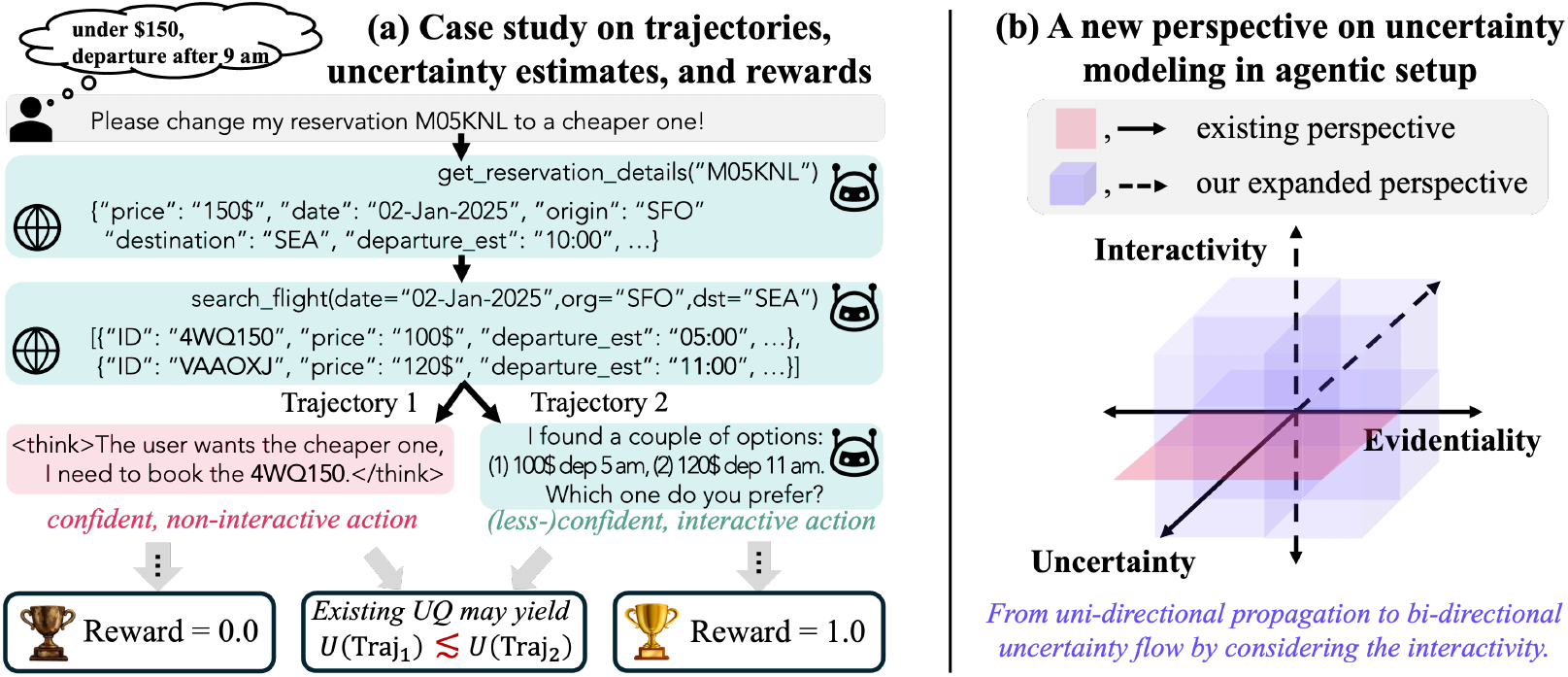}
    \caption{\textbf{Limitation of existing UQ (a) and our suggestion (b).} Prior works just concern the evidentiality when designing or evaluating UQ while neglecting \textit{interactivity}, so they may fail to reliably capture the agent's failure. We recommend expanding a dimension of interest, interactivity, and moving on to tackle the third challenge (Sec,~\ref{sec:challenges:dynamics}) by allowing reducible uncertainty modeling of agents.}
    \label{fig:example}
\end{figure*}
\paragraph{Motivation.} From a unified view discussed in Section~\ref{sec:formulation:unif}, we see that most of the existing UQ methods model the multi-step uncertainty as a uni-directional propagation, i.e., every step-level uncertainty positively contributes to the total uncertainty, without distinguishing action types at each turn. One might claim that the average uncertainty $\frac{1}{T}\sum^{T}_{t=1} U(\mathcal{F}_{t}|\mathcal{F}_{t-1})$ can already be fine, as it might produce a lower value for the chain of certain actions and vice versa. However, it does not consider the type of action which is critical in agentic setups, where an agent's \textit{interactivity} is key to earning a reward. See Fig.~\ref{fig:example} (a), a scenario wherein a user does not fully specify their goal upfront. After searching for options, the agent either reasons itself or interacts with the user to delegate decisions. Although both trajectories are evidential, interaction-oriented ones tend to achieve higher rewards, despite reasoning-oriented ones often yielding greater confidence. Yet, existing methods can produce misleading uncertainty estimates, fail to meet the desiderata (Eq~\ref{eq:agentuq_goal}). From our pilot experiments (Figure~\ref{fig:uncertainty_evo_gpt_full} and Figure~\ref{fig:uncertainty_evo_kimi_full} provided in Appendix~\ref{sec:apdx:exp:result}), we observed that an action-independent naive aggregation strategy could not faithfully separate between the success and failure groups on average, resulting in full overall performance reported in Table~\ref{tab:uq_eval}, regardless of the types of uncertainty estimator.

\paragraph{Recommendation: conditional uncertainty reduction process.} We therefore need a new perspective to model an agent's uncertainty dynamics that accounts for both reductions and increases, explicitly depending on the agent's interactivity at each step. While there are multiple ways to realize this concept, we propose one possible implementation as a prototype: \textit{information gating}, which goes hand in hand with the information-theoretic measure of uncertainty noted in~Sec.~\ref{sec:formulation}. Specifically, we derive a lower bound of total uncertainty $U(\mathcal{F}_{\leq T})$ that admits a selective reduction or increase in uncertainty throughout the trajectory:

{\small
\begin{align}
    U(\mathcal{F}_{\leq{T}})=U(E_0,O_0)+\sum_{i=1}^{T} U(\mathcal{F}_{i}|\mathcal{F}_{i-1}) 
    &= U(E_0,O_0)+\sum_{i=1}^{T} [U(A_{i}|E_{i-1},O_{i-1})+U(O_{i}|A_{i},E_{i})] \nonumber \\
    &= U(E_0,O_0)+\sum_{i=1}^{T} U(A_{i}|E_{i-1},O_{i-1})[1+\frac{U(O_{i}|A_{i},E_{i})}{U(A_{i}|E_{i-1},O_{i-1})}]  \nonumber \\
    &\geq U(E_0,O_0)+\sum_{i=1}^{T} U(A_{i}|E_{i-1},O_{i-1})g(\mathcal{F}_i). \label{eq:infogate}
\end{align} 
\normalsize}
Here, we introduce a conditional gating function $g(\cdot)$ that reduces or increases a current turn uncertainty depending on the action. That is, for a set of valid uncertainty reduction actions, $\mathcal{A}^{-}$, we define:
\begin{center}
$g(\mathcal{F}_i)=\begin{cases}
\frac{- \text{Info}(O_i;O_0|E_{i}\backslash O_0)}{U(A_{i}|E_{i-1},O_{i-1})} & \text{if}~A_i\in 
\mathcal{A}^{-}, \\
1+\frac{U(O_{i}|A_{i},E_{i})}{U(A_{i}|E_{i-1},O_{i-1})} & \text{otherwise.}
\end{cases}$ 
\end{center}
where $\text{Info}(\cdot;\cdot)$ denotes mutual information or pointwise mutual information that measures the \emph{amount of information gain} by having the current observation $O_i$ w.r.t. initial query $O_0$ given conversation history $E_i\backslash O_0$. For each turn, if the agent's action is classified as an interactive (inviting a user or a tool) and evidential one (factual or not in conflict with stored data), then $g(\cdot)$ produces the amount of information gain with negative sign to realize uncertainty reduction; otherwise, it propagates uncertainty (See Fig.~\ref{fig:curp_app}).

\begin{figure*}[!t]
    \centering
    \includegraphics[width=\linewidth]{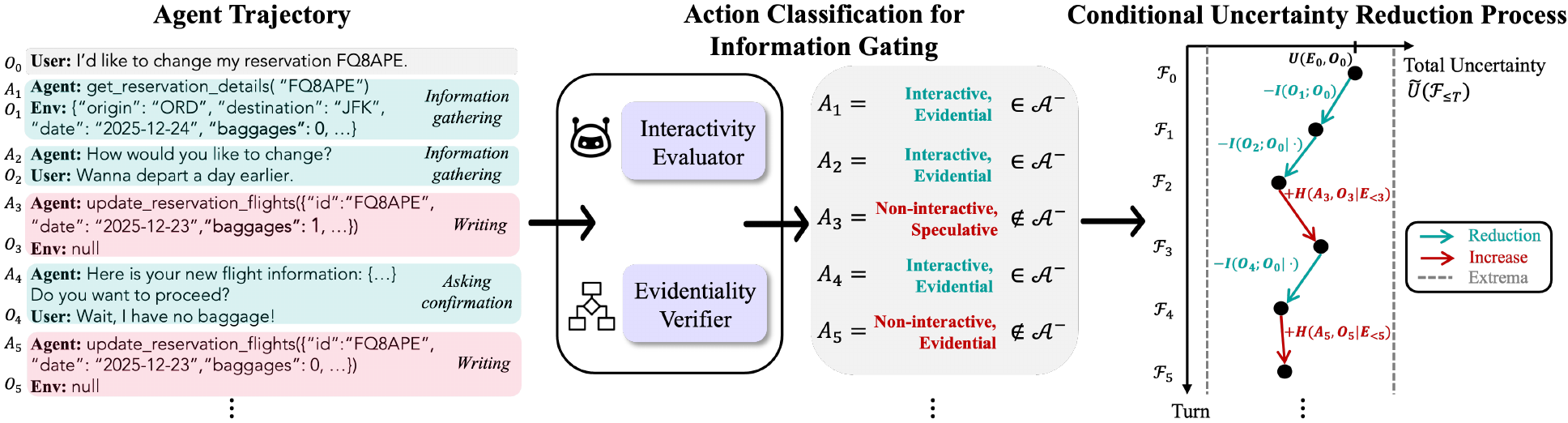}
    \caption{\textbf{Illustration on the proposed agent UQ paradigm.} We propose the conditional uncertainty reduction process for LLM agents by discerning interactive and evidential actions from others.}
    \label{fig:curp_app}
\end{figure*}

Importantly, the signed gating function handles uncertainty in a more interpretable manner than traditional estimates by letting not only quantitative but also \textbf{directional interpretations}. Besides, this uncertainty model allows us to derive the analytic extrema, i.e., maximum and minimum, of total uncertainty in closed forms as shown in Lemma~\ref{lem:apdx:extreme}.

\subsection{Implementation Sketch: Conditional Uncertainty Reduction Process} \label{sec:apdx:action-conditional:impl}
We discuss a possible implementation recipe for this conditional uncertainty reduction process to provide useful guidance for practitioners and researchers.

\paragraph{Action classifier.} To realize the conditional information gate, we first need to classify actions to determine whether each action will contribute to total uncertainty reduction for that task or not. As emphasized in Section~\ref{sec:formulation}, considering the interactivity of actions is key to accounting for agent uncertainty dynamics, as well as the evidentiality (or factuality) of the action that is solely considered for the existing approaches so far. Therefore, we may want to implement a compound classifier with both interactivity assessment and evidentiality verification. They can be implemented through a rule/syntactic-based, simple verifier~\citep{mu2024rule}, an LLM judge with specialized prompting, fine-tuning~\citep{liu2025compassverifier}, or a hybrid of rule-based and LLM-based~\citep{peng2025verif}. Below are the example prompts one can adopt in $\tau^{2}$-bench (Retail) for LLM-judge in interactivity and evidentiality classification.

\begin{tcolorbox}[
  title=Example System Prompt for Interactivity Classification,
  colback=black!5!white,
  colframe=black!60!white,
  fonttitle=\bfseries,
  breakable
]
You are classifying customer service agent actions as either:\\
- "interactive": asking questions, requesting clarification, using tools to collect info\\
- "non\_interactive": thinking, state-modifying actions, providing final answers\\
Respond as a JSON array of objects with keys: "action\_idx" (int), "label" (str), "justification" (str).
\end{tcolorbox}

\begin{tcolorbox}[
  title=Example System Prompt for Evidentiality Classification,
  colback=black!5!white,
  colframe=black!60!white,
  fonttitle=\bfseries,
  breakable
]
You are evaluating whether a customer service agent's actions are grounded \\
in the conversation context and domain policy.\\
\\
For each assistant action, determine:\\
- Are tool call arguments traceable to user-provided information or prior tool results?\\
- Are text claims about policies consistent with the provided policy document?\\
- Are stated facts (account details, prices, statuses) confirmed by tool results in the conversation?\\
\\
Labels:\\
- "evidential": The action is well-grounded in the conversation and policy.\\
- "non\_evidential": The action contains fabricated information, hallucinated \\
tool arguments, or claims contradicting the policy.\\
- "uncertain": Cannot determine grounding with available context.\\
\\
Respond as a JSON array of objects with keys:\\
"action\_idx" (int), "label" (str), "justification" (str), "issues" (list of str, empty if none).
\end{tcolorbox}

\paragraph{Uncertainty estimation.} In the proposed total uncertainty lower bound Eq.~\ref{eq:infogate}, there are three uncertainty terms, including initial query uncertainty, action uncertainty, and observation uncertainty. The \textit{action uncertainty} can be estimated or approximated by leveraging existing techniques, such as sampling-based ones entropy~\citep{malinin2021uncertainty,kuhn2023semantic}, pure probability-based methods~\citep{marina2020unsupervised,duan2024shifting}, hybrid~\citep{vashurin2025uncertainty}, or verbalized confidence methods~\citep{tian2023just,xiong2024can,yang2024verbalized}. However, as mentioned in Section~\ref{sec:challenges:measure}, all these existing approaches have their own limitations, and one may have to choose an appropriate estimator depending on the situation.

Meanwhile, \textit{observation uncertainty} and \textit{initial query uncertainty} will be much harder to estimate, as we usually do not have full knowledge of the world (See Section~\ref{sec:challenges:obs-uq}). If the system allows multiple trials, one can utilize a sampling-consistency-based approach to estimate the observation uncertainty, likewise the traditional black-box UQ methods. If the system does not allow this, one may try to construct a datastore that contains some relevant data from a similar task and leverage nonparameter estimation methods~\citep{ren2023outofdistribution,kotelevskii2022nonparametric}. Another line of methods can be establishing a world model to approximate the environment transition and estimate observation uncertainty from it~\citep{chae2025web,zhou2025dinowm,anonymous2026dreamphase}. Figure~\ref{fig:obs_uctt_full} provides results on an auxiliary LLM-based approximation that mimics world modeling philosophy.

\paragraph{Mutual information estimation.} The proposed information gating mechanism features information reduction based on the amount of information gain by measuring the conditional mutual information between the current observation $O_i$ and the initial query $O_0$, given all previous conversations except the initial query $E_{i}\backslash O_0$. While the estimation of mutual information is notorious for its difficulty, recent advancements on neural estimators~\citep{belghazi2018mutual,cheng2020club,mukherjee2020ccmi,molavipour2020conditional,gritsai2025mist} shed light on this problem by enabling estimation in high-dimensional, unstructured data space. Besides, LLM-prompting-based mutual information estimation~\citep{robertson2025measure} is also emerging as an attractive line of work, though it has yet to provide a solid theoretical foundation.

\paragraph{Conditional certainty maximization process.} As an analogy to the conditional uncertainty reduction process presented before, we can also envision a ``\textit{certainty maximization}'' approach by replacing the uncertainty with a certainty measure, such as informational energy~\citep{pardo1991information}, mentioned in Appendix~\ref{sec:apdx:uncertainty}. Given that Onicescu's informational energy brings its correlation coefficient as well, we may be able to implement the exact same style information-gating-based method that has the opposite direction to uncertainty reduction dynamics.

\subsection{Theoretical Analysis} \label{sec:apdx:action-conditional:proof}
\begin{lemma}[Extrema of Information Gating]\label{lem:apdx:extreme} Let the lower bound of agent total uncertainty in Eq.~\ref{eq:infogate} be $\tilde{U}(\mathcal{F}_{\leq T})$, denote $U(X):=H(X)=\mathbb{E}[-\log P(X)]$ and $\text{Info}(X;Y):=I(X;Y)=\mathbb{E}[\log\frac{P(X,Y)}{P(X)P(Y)}]$, then, we have:
\begin{align*}
    \tilde{U}(\mathcal{F}_{\leq T})&\geq H(E_0,O_0)-\sum_{i=1}^{T-1} I(O_i,O_0|E_{i}\backslash O_0) +H(A_T|E_{T-1},O_{T-1}),\\
    \tilde{U}(\mathcal{F}_{\leq T})&\leq H(E_0,O_0)+\sum_{i=1}^{T} H(A_i,O_i|E_{i-1},O_{i-1}).
\end{align*}
\end{lemma}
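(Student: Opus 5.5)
The plan is to write $\tilde{U}(\mathcal{F}_{\leq T})$ explicitly by substituting the gating function $g$ into Eq.~\ref{eq:infogate}, and to bound it term by term. Each summand $i$ takes one of two forms, depending on whether $A_i\in\mathcal{A}^{-}$:
\begin{itemize}
\item if $A_i\in\mathcal{A}^{-}$, the summand is $-I(O_i;O_0|E_i\backslash O_0)$;
\item otherwise, it is $H(A_i|E_{i-1},O_{i-1})+H(O_i|A_i,E_i)$.
\end{itemize}
In the second case, since $E_i=h(E_{i-1},O_{i-1},A_i)$ is deterministic, the chain rule from Appendix~\ref{sec:apdx:formulation:expansion} collapses the summand to $H(A_i,O_i|E_{i-1},O_{i-1})$. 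So $\tilde{U}=H(E_0,O_0)+\sum_{i\in S^-}(-I_i)+\sum_{i\notin S^-}H(A_i,O_i|E_{i-1},O_{i-1})$, where $S^-$ is the set of reduction turns. Both extrema then come from optimizing over which turns fall in $S^-$.

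\textbf{Upper bound.} Mutual information is nonnegative, so $-I_i\le 0\le H(A_i,O_i|E_{i-1},O_{i-1})$; the last inequality holds because the trajectory variables are discrete strings with nonnegative entropy. Replacing every reduction summand by the corresponding joint entropy can therefore only increase the total. This gives the case $S^-=\emptyset$, which is exactly the stated upper bound. This step is routine.

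\textbf{Lower bound.} Here $S^-=\{1,\dots,T-1\}$ should be the minimizer. For any turn $i\le T-1$, the summand is at least $\min\{-I_i,\,H(A_i,O_i|\cdot)\}=-I_i$, since $-I_i\le 0\le H$. For the final turn $T$, I would use the convention that the terminating action (the answer or stop signal) is not an information-seeking action, so $A_T\notin\mathcal{A}^{-}$. Its summand is then $H(A_T|E_{T-1},O_{T-1})+H(O_T|A_T,E_T)\ge H(A_T|E_{T-1},O_{T-1})$, dropping the nonnegative observation term. Summing these bounds gives the stated lower bound.

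The main obstacle is the final turn. The statement implicitly requires that $A_T\notin\mathcal{A}^-$; if $A_T$ could be a reduction action, one would only get $-I_T$, which does not dominate $H(A_T|\cdot)$. I would make this termination convention explicit, justified by the termination rule in Sec.~\ref{sec:formulation:def}. A second point to state explicitly is that $g$ is being treated as defined with entropies and mutual information (expected quantities), not pointwise ones. With pointwise MI, the nonnegativity $-I_i\le 0$ fails, and the comparison in the upper bound would need to hold only in expectation. A third point is that the ratios defining $g$ assume $H(A_i|\cdot)>0$, so that the factor $U(A_i|\cdot)$ cancels in the products; I would note this as an assumption.
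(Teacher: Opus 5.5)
Your proposal is correct and is essentially the paper's argument: after substituting $g$, each summand is either $-I(O_i;O_0|E_i\backslash O_0)$ or $H(A_i,O_i|E_{i-1},O_{i-1})$. The upper bound corresponds to every turn propagating, and the lower bound to every turn $1,\dots,T-1$ being a reduction turn with a non-interactive final action. The only difference is at turn $T$: the paper additionally assumes the final observation is deterministic, so that $H(O_T|A_T,E_T)=0$, whereas you only drop this nonnegative term, which needs slightly less; your term-wise version also carries the minus sign on $I$ that the paper's proof omits.
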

\begin{proof}   
    For $i$, if $A_{i} \in \mathcal{A}^{-}$ (resp. $A_{i} \notin \mathcal{A}^{-}$), turn-level uncertainty $U(\mathcal{F}_{i})$ becomes $I(O_i,O_0|E_{i}\backslash O_0)$ (resp. {\small$U(\mathcal{F}_{i})=U(A_i|E_{i-1},O_{i-1})+U(O_i|A_i,E_i)=H(A_i,O_i|E_{i-1},O_{i-1})$}). Thus, if all the intermediate actions are interactive and evidential, e.g., $A_{i} \in \mathcal{A}^{-}$ for all $1\leq i \leq T-1$ (resp. $A_{i} \notin \mathcal{A}^{-}$ for all $1\leq i \leq T-1$), our total uncertainty lower bound $\tilde{U}(\mathcal{F}_{\leq T})$ becomes a monotonic uncertainty reduction process (resp. monotonic propagation process), deriving the above two inequalities. Here, the final termination-turn action should not be interactive, and the observation in this turn is deterministic in our considered setups (Fig.~\ref{fig:pgm}), resulting in the accumulation of last action uncertainty $H(A_{T}|E_{T-1},O_{T-1})$.
\end{proof}

\subsection{Action Categorization} \label{sec:apdx:action-conditional:action}
\begin{table*}[t]
\centering
\resizebox{\linewidth}{!}{%
\small
\begin{tabular}{p{3.3cm}p{2cm}p{9cm}}
\toprule
\textbf{Action Category} & \textbf{Interactivity} & \textbf{Example} \\
\midrule
\textbf{Information-gathering} & interactive & Agent uses a read tool (e.g., get\_reservation\_details or search\_flights) to retrieve data or asks the user for missing information, e.g., “Could you provide your reservation number and last name?”.\\
\midrule
\textbf{Asking clarification or confirmation to user} & interactive  & Agent asks the user preferences or confirmation on decision, such as, “Do you want me to proceed with booking FQ8APE?”.\\
\midrule
\textbf{Thinking} & non-interactive  & Agent plans the future action sequence based on the previous trajectory.\\ \midrule
\textbf{State-changing tool call (writing)} & non-interactive  & Agent calls the tools that modify the database (e.g., cancel\_reservation, book\_reservation, update\_reservation\_flights) commit to an outcome.\\ \midrule
\textbf{Providing final information to user} & non-interactive  & Agent reports the result of an action, such as “Your reservation has been cancelled; your refund will be processed” or “Your flight has been rebooked to SFO departing at 8 a.m.”.\\ \bottomrule
\end{tabular}}
\caption{\textbf{Example action classes in Airline booking scenario in $\tau^2$-bench.} A number of the agent's actions can be categorized into five classes, which may reduce or increase the total uncertainty. See Table~\ref{tab:action_taxonomy} for extended analyses.} 
\label{tab:action_taxonomy_tbench_air}
\end{table*}
\begin{table*}[!t]
\resizebox{\linewidth}{!}{%
\small
\begin{tabular}{p{3.3cm}p{2cm}p{9cm}}
\toprule
\textbf{Action Category} & \textbf{Interactivity} & \textbf{Scenario-specific Example} \\
\midrule
\textbf{Information-gathering} & interactive & Search flight information ($\tau^2$-bench Airline); retrieve order status ($\tau^2$-bench Retail); read message box (ToolSandbox)\\
\midrule
\textbf{Asking clarification or confirmation to user} & interactive & Ask for flight choice or request final booking decision ($\tau^2$-bench Airline); instruct or ask the user to choose cancellation ($\tau^2$-bench Retail); ask for providing contact information or request updating contact number (ToolSandbox) \\
\midrule
\textbf{Thinking} & non-interactive & Plan flight schedule ($\tau^2$-bench Airline); consider returning items ($\tau^2$-bench Retail); think the reason of failed temperature checking (ToolSandbox) \\ \midrule
\textbf{State-changing tool call (write)} & non-interactive & Update flight booking ($\tau^2$-bench Airline); return/cancel items ($\tau^2$-bench Retail); Turn on Wifi (ToolSandbox)\\ \midrule
\textbf{Providing final information to user} & non-interactive & Summarize updated flights ($\tau^2$-bench Airline); summarize returned items ($\tau^2$-bench Retail); summarize phonebook update results (ToolSandbox)\\ \bottomrule
\end{tabular}}
\caption{\textbf{Example action classes in three scenarios from two benchmarks, $\tau^2$-bench~\citep{yao2024tau,barres2025tau} and ToolSandbox~\citep{lu2025toolsandbox}.}} 
\label{tab:action_taxonomy}
\end{table*}
An actual example categorization of the action interactivity in an airline assistant task is provided in Table~\ref{tab:action_taxonomy_tbench_air}. Besides, a higher abstraction of action category to two more scenarios, retail in $\tau^2$-bench~\citep{yao2024tau,barres2025tau} and ToolSandbox~\citep{lu2025toolsandbox} is provided in Table~\ref{tab:action_taxonomy}.

\section{Experiment} \label{sec:apdx:exp}
To ground our statements with empirical results, we conducted a small-scale pilot experiment on a real-world agent benchmark in general assistance tasks to evaluate existing UQ techniques on LLM agents. 

\subsection{Setup} \label{sec:apdx:exp:setup}
\paragraph{Dataset and task.}
$\tau^2$-bench~\citep{barres2025tau} (the latest version of $\tau$-bench~\citep{yao2024tau}) is a representative agent benchmark adopted in frontier LLM evaluation~\citep{gemini_team2026gemini31pro}, consisting of three different domains -- airline, retail, and telecom, where the total number of sample tasks is 50, 114, and 114, respectively. By following~\cite{gemini_team2026gemini31pro}, we experimented with retail and telecom domains. The tasks in the Retail domain are about helping users cancel or modify pending orders, return or exchange delivered orders, modify user addresses, or provide information. The tasks in the telecom domain are about tackling telecommunication issues in mobile devices, organized into roughly three categories: service issues, mobile data issues, and MMS issues.

\paragraph{LLMs and inference details.} We experiment with two modern LLMs: GPT-4.1~\citep{openai2025gpt41} and Kimi-K2.5~\citep{kimi2026} through Microsoft Azure\footnote{\url{https://azure.microsoft.com/en-us}} AI Foundry API endpoints. Most modern proprietary LLMs do not provide the output log probability in multiple inference engine consistency, and GPT-4.1 was a viable choice for us to adopt here. Meanwhile, small-scale open-source models that can be run in local GPUs did not achieve meaningful performance on $\tau^2$-bench due to its task complexity, so we stick with a state-of-the-art large-scale open-source model, Kimi-K2.5, through third party platform. We set the temperature to 0.0 during inference, which is the default value of $\tau^2$-bench. Due to the resource constraint, we only run a single trial per task in contrast to the four trials conducted in $\tau^2$-bench. User simulator LLM was fixed to Kimi-K2.5 for all tasks, and we only varied agent LLM between GPT-4.1 and Kimi-K2.5.

\paragraph{Implementation of uncertainty estimators.} We adopt three different uncertainty/confidence estimators: negative log-likelihood (NLL), Entropy, and verbalized confidence. The NLL and Entropy are calculated per token and averaged across the entire sequence of actions in the trajectory, whereas the verbalized confidence is calculated per turn and also averaged across the entire trajectory. To be specific, 
\begin{itemize}
    \item $\hat{U}_{\text{NLL}}(\mathcal{F}_{\leq T})=\frac{1}{T}\sum_{i=1}^{T}\hat{U}_{\text{NLL}}(A_i|E_{i-1},O_{i-1})=\frac{1}{T}\sum_{i=1}^{T}\frac{1}{L_i}\sum_{j=1}^{L_i}-\log P^{1}_{\pi,\mathcal{T}}(A_{i,j}|E_{i-1},O_{i-1})$
    \item $\hat{U}_{\text{Ent}}(\mathcal{F}_{\leq T})=\frac{1}{T}\sum_{i=1}^{T}\hat{U}_{\text{Ent}}(A_i|E_{i-1},O_{i-1})=\frac{1}{T}\sum_{i=1}^{T}\frac{1}{L_i}\sum_{j=1}^{L_i}\mathbb{E}_{A_{i,j}}[-\log P^{5}_{\pi,\mathcal{T}}(A_{i,j}|E_{i-1},O_{i-1})]$
    \item $\hat{C}_{\text{verb}}(\mathcal{F}_{\leq T})=\frac{1}{T}\sum_{i=1}^{T}\hat{C}_{\text{verb}}(A_i|E_{i-1},O_{i-1},\texttt{vc-prompt})$
\end{itemize}
where $L_i$ denotes the sequence length of $i$-th turn, $P^{K}$ indicates top-K output token distribution (a chosen token probability if $K=1$), and \texttt{vc-prompt} is a prompt for verbalized confidence we designed as below,

\begin{tcolorbox}[
  title=System Prompt for Verbalized Confidence~\citep{tian2023just},
  colback=black!5!white,
  colframe=black!60!white,
  fonttitle=\bfseries,
  breakable
]
Provide the probability (0.0 to 1.0) that your current response/action is correct for the given conversation history and context so far. Give ONLY the probability, no other words or explanation.
\end{tcolorbox}

\paragraph{Evaluation details.} Since $\tau^2$-bench does not provide turn-level annotation, we conduct trajectory-level evaluation that compares the trajectory-level uncertainty $U(\mathcal{F}_{\leq T})$ with the final task failure $1-r$ (or success $r$ in the case of verbalized confidence). For this, we measure the area under the receiver operating characteristic curve (AUROC), Spearman rank correlation coefficient $\rho$, and Kendall rank correlation coefficient $\tau$. The average reward (success rate) per model-domain pair is also reported in Table~\ref{tab:uq_eval}.

\subsection{Extended Results} \label{sec:apdx:exp:result}
This subsection provides full analysis results on the observation uncertainty in Figure~\ref{fig:obs_uctt_full} and uncertainty evolution in Figure~\ref{fig:uncertainty_evo_gpt_full} and Figure~\ref{fig:uncertainty_evo_kimi_full}. In Figure~\ref{fig:obs_uctt_full}, we compare three different observation uncertainties over all user messages across trajectories: (1) the GT user simulator uncertainty (computed from the output probability of user simulator LLM), (2) estimates driven from the output probability of agent LLM by feeding user messages along with all previous context, and (3) estimates driven from the output probability of an auxiliary LLM by feeding user messages along with all previous context, coupled with a prompt as below.

\begin{tcolorbox}[
  title=System Prompt for Observation Distribution Approximator,
  colback=black!5!white,
  colframe=black!60!white,
  fonttitle=\bfseries,
  breakable
]
You are an impartial observer monitoring a customer-service conversation between an AI agent and a human user. Your task is to predict the next observation that the agent will receive:\\
  1. A user message - the customer's natural-language reply, or\\
  2. A tool result  - the deterministic output returned by a backend API after the agent issues a tool call.\\
\\
Given the full conversation history up to this point, generate the most likely continuation as if you were the user or the tool backend. Aim to model the distribution of plausible observations as faithfully as possible: assign high probability to expected responses and low probability to surprising ones.
\end{tcolorbox}

\begin{figure*}
    \centering
    \includegraphics[width=\linewidth]{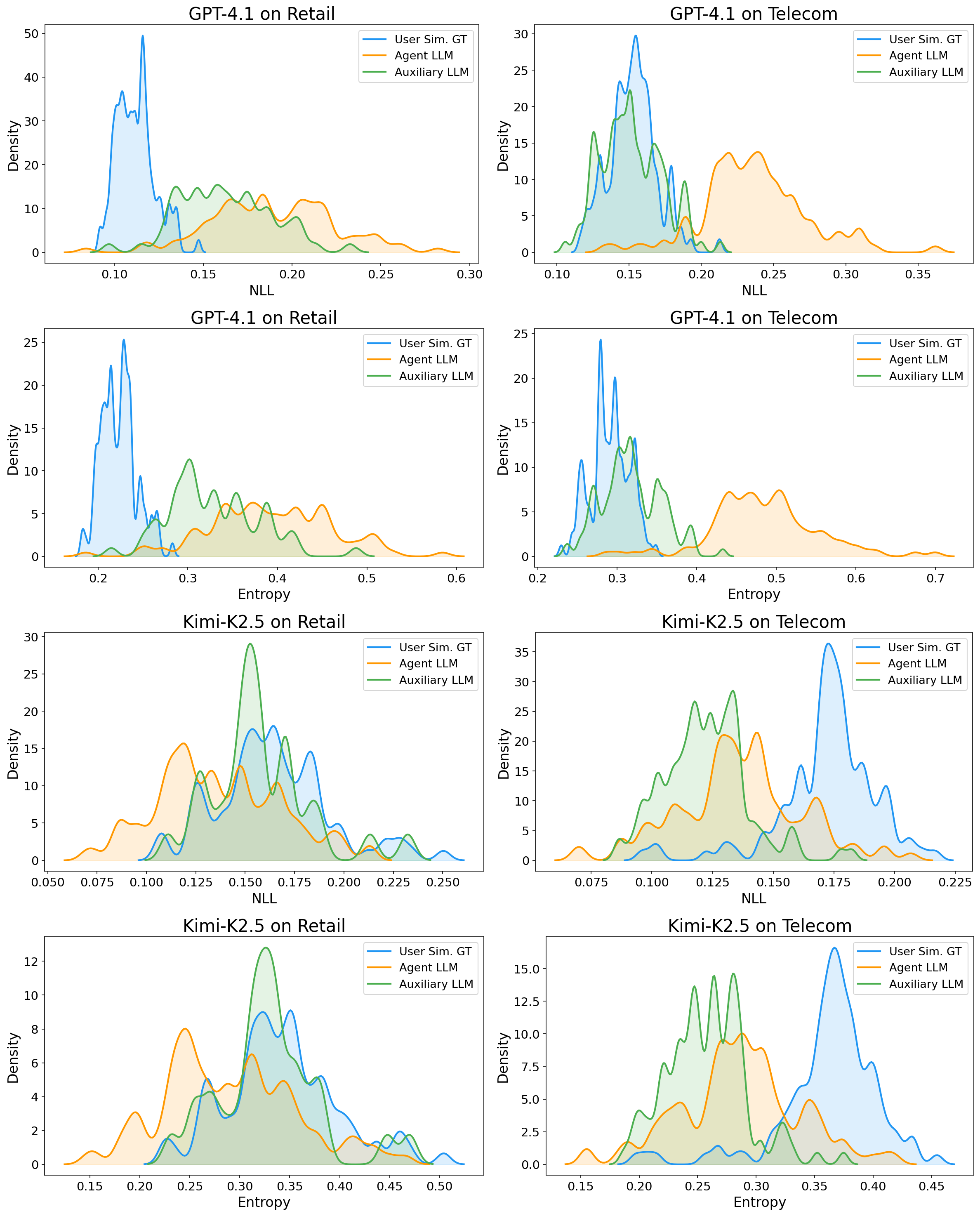}
    \caption{\textbf{Comparison of observation uncertainty estimates on $\tau^{2}$-bench.} We compare distributions over observation uncertainty estimates, i.e., average NLL and Entropy over all text message tokens from the user. Distributions of the ground truth user simulator LLM and that of the agent-approximated one show remarkable deviation, whereas auxiliary LLM prompting-based estimation shows somewhat close distribution with the ground truth.}
    \label{fig:obs_uctt_full}
\end{figure*}
We observe remarkable deviation between the GT user simulator's uncertainties and the estimates from the agent LLM in general. However, using an auxiliary LLM as a world model somehow reduces the gap in many cases. Cost-effective auxiliary model-based approximation can be worth exploring.

Next, Figure~\ref{fig:uncertainty_evo_gpt_full} and~\ref{fig:uncertainty_evo_kimi_full} compare uncertainty evolutions over turns between success and failure groups of tasks. To compare varying-length trajectories across different tasks, we normalize the turn index in X-axis into a zero-to-one scale. Results show that existing naive uncertainty aggregation methods can not discern the failure group from the success group to a statistically meaningful degree (the shaded areas in the plots denote a single standard deviation confidence interval), both in the middle of and at the end of the trajectory. 

\begin{figure*}[h]
    \centering
    \includegraphics[width=\linewidth]{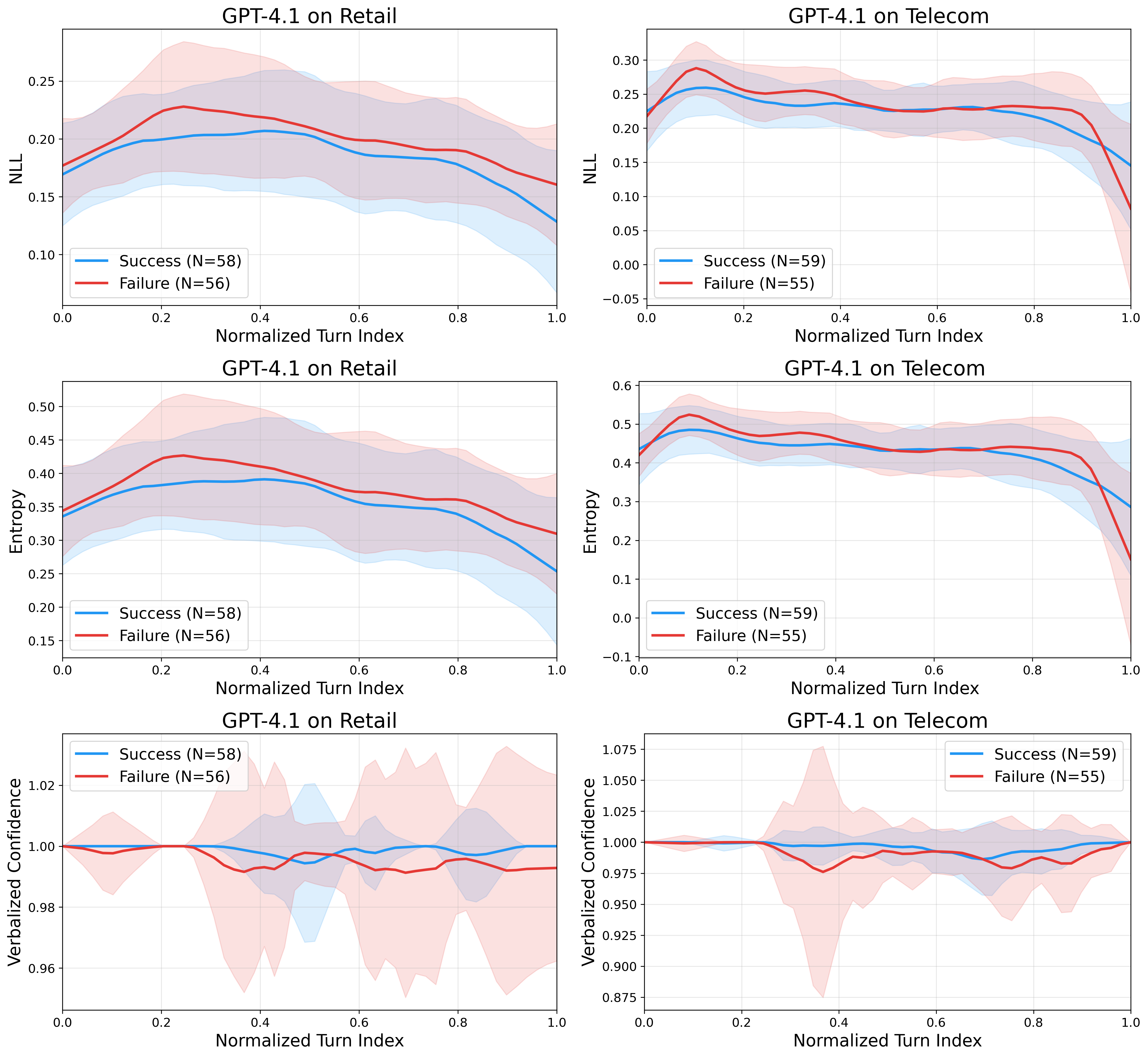}
    \caption{\textbf{Uncertainty evolution of GPT-4.1 across the trajectory on $\tau^{2}$-bench.} We visualize the evolution of action uncertainty/confidence estimates over the normalized turn index, averaged over the entire trajectories from two groups: task success and failure. The shade area denotes one standard deviation.}
    \label{fig:uncertainty_evo_gpt_full}
\end{figure*}
\begin{figure*}
    \centering
    \includegraphics[width=\linewidth]{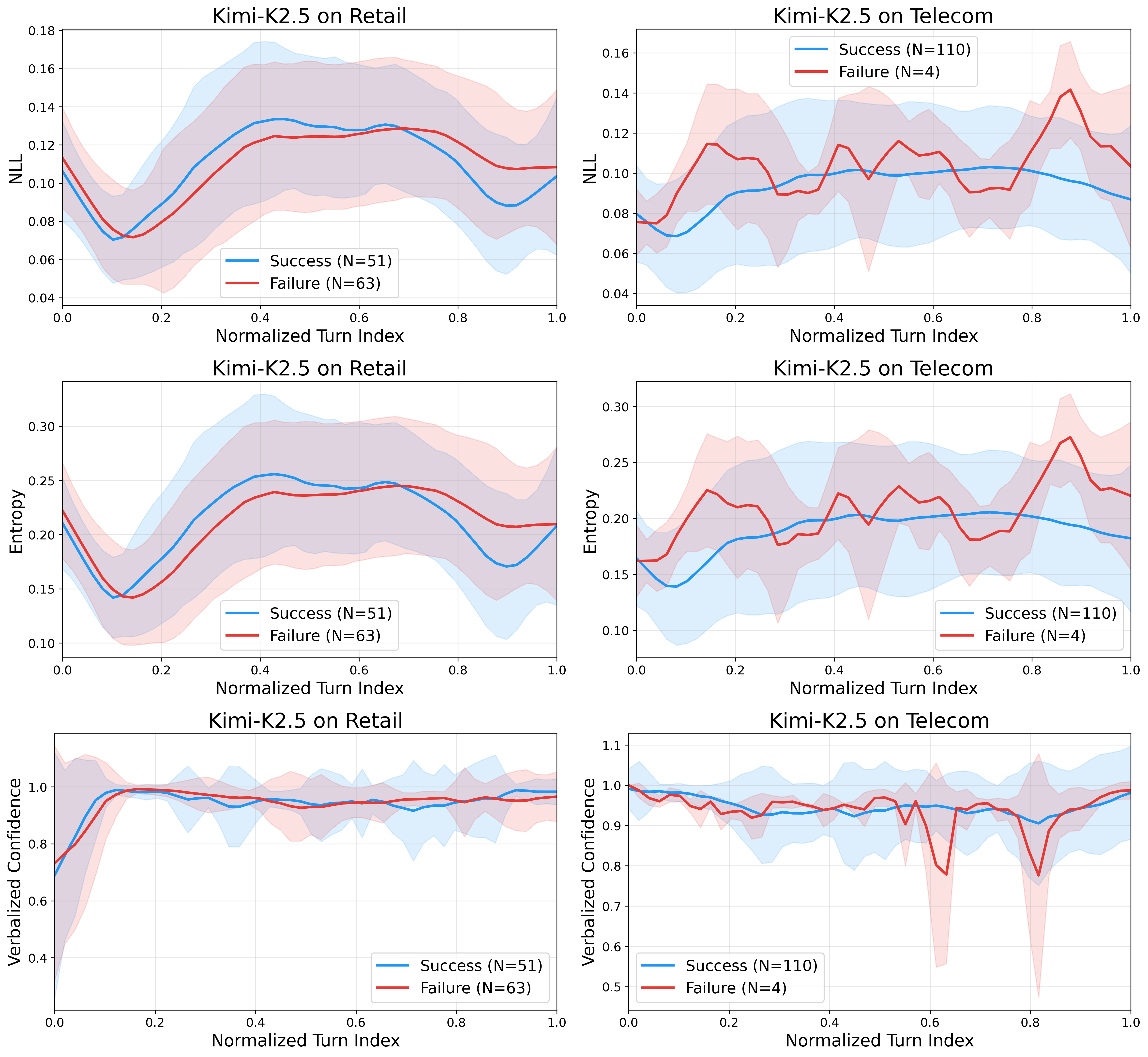}
    \caption{\textbf{Uncertainty evolution of Kimi-K2.5 across the trajectory on $\tau^{2}$-bench.} We visualize the evolution of action uncertainty/confidence estimates over the normalized turn index, averaged over the entire trajectories from two groups: task success and failure. The shade area denotes one standard deviation.}
    \label{fig:uncertainty_evo_kimi_full}
\end{figure*}

\clearpage
\newpage
\section{Mini-Survey on Agent Benchmark} \label{sec:apdx:survey}
We did a small-scale survey on LLM-based agent benchmarks released to the public from February 2023 to February 2026, where we sourced the papers mostly from \texttt{arxiv.org} and also referred to ML/NLP conference proceedings. We ended up with a non-exhaustive collection of 44 papers, mainly designed for pure LLM agents, while a few of them also support multimodal scenarios of VLM agents. We confine the scope of the survey to the agent's problem-solving capability rather than other desired properties such as safety and trustworthiness~\citep{zhang2024agent,levy2024st,andriushchenko2025agentharm}. We categorize the papers into three classes depending on the evaluation granularity: turn-level, milestone-level, and trajectory-level. Table~\ref{tab:survey_1} summarizes the turn-level and milestone-level evaluation benchmarks, and Table~\ref{tab:survey_2} summarizes the trajectory-level ones. The scarcity of fine-grained benchmarks is severe and becomes a real bottleneck to establishing and evaluating the agent UQ framework.
\begin{table*}[!t]
\resizebox{\linewidth}{!}{%
\scriptsize
\begin{tabular}{p{9.25cm}p{3cm}p{2cm}}
\toprule
\textbf{Title} & \textbf{Citation} & \textbf{Evaluation} \\
\midrule
Agent-as-a-Judge: Evaluate Agents with Agents & \cite{pmlr-v267-zhuge25a} & Turn-level \\  
API-Bank: A Comprehensive Benchmark for Tool-Augmented LLMs & \cite{li-etal-2023-api} & Turn-level \\ 
Clembench: Using Game Play to Evaluate Chat-Optimized Language Models as Conversational Agents & \cite{chalamalasetti2023clembench} & Turn-level \\  
Mind2Web: Towards a Generalist Agent for the Web & \cite{deng2023mind2web} & Turn-level \\ \midrule
AgentBoard: An Analytical Evaluation Board of Multi-turn LLM Agents & \cite{chang2024agentboard} & Milestone-level \\ 
DevBench: A Comprehensive Benchmark for Software Development & \cite{li2025prompting} & Milestone-level \\ 
Gaia2: Benchmarking LLM Agents on Dynamic and Asynchronous Environments & \cite{froger2026gaia2} & Milestone-level \\ 
PaperBench: Evaluating AI's Ability to Replicate AI Research & \cite{pmlr-v267-starace25a} & Milestone-level \\ 
ScienceAgentBench: Toward Rigorous Assessment of Language Agents for Data-Driven Scientific Discovery & \cite{chen2025scienceagentbench} & Milestone-level \\ 
ST-WebAgentBench: A Benchmark for Evaluating Safety and Trustworthiness in Web Agents & \cite{levy2024st} & Milestone-level \\ 
The Berkeley Function Calling Leaderboard (BFCL): From Tool Use to Agentic Evaluation of Large Language Models & \cite{patil2025berkeley} & Milestone-level \\ 
TheAgentCompany: Benchmarking LLM Agents on Consequential Real World Tasks & \cite{xu2025theagentcompany} & Milestone-level \\ 
ToolSandBox: A Stateful, Conversational, Interactive Evaluation Benchmark for LLM Tool Use Capabilities & \cite{lu2025toolsandbox} & Milestone-level \\ 
WebCanvas: Benchmarking Web Agents in Online Environments & \cite{pan2024webcanvas} & Milestone-level \\ 
\bottomrule
\end{tabular}}
\caption{\textbf{Agent benchmark research with turn-level and milestone-level evaluation protocols.}} 
\label{tab:survey_1}
\end{table*}
\begin{table*}[!t]
\resizebox{\linewidth}{!}{%
\scriptsize
\begin{tabular}{p{11cm}p{3.5cm}}
\toprule
\textbf{Title} & \textbf{Citation} \\
\midrule
AgentBench: Evaluating LLMs as Agents & \cite{liu2024agentbench} \\
AgentGym: Evolving Large Language Model-based Agents across Diverse Environments & \cite{xi-etal-2025-agentgym} \\
AppWorld: A Controllable World of Apps and People for Benchmarking Interactive Coding Agents & \cite{trivedi-etal-2024-appworld} \\
AssistantBench: Can Web Agents Solve Realistic and Time-Consuming Tasks? & \cite{yoran2024assistantbench} \\
BrowseComp: A Simple Yet Challenging Benchmark for Browsing Agents & \cite{wei2025browsecomp} \\
CORE-Bench: Fostering the Credibility of Published Research Through a Computational Reproducibility Agent Benchmark & \cite{siegel2024corebench} \\
Finance Agent Benchmark: Benchmarking LLMs on Real-world Financial Research Tasks & \cite{bigeard2025finance}  \\
GAIA: A Benchmark for General AI Assistants & \cite{mialon2023gaia} \\
GameBench: Evaluating Strategic Reasoning Abilities of LLM Agents & \cite{costarelli2024gamebench} \\
InterCode: Standardizing and Benchmarking Interactive Coding with Execution Feedback & \cite{yang2023intercode} \\
MCPAgentBench: A Real-world Task Benchmark for Evaluating LLM Agent MCP Tool Use & \cite{liu2025mcpagentbench} \\
MCP-Atlas: A Large-Scale Benchmark for Tool-Use Competency with Real MCP Servers & \cite{bandi2026mcpatlas} \\
MedAgentBench: A Realistic Virtual EHR Environment to Benchmark Medical LLM Agents & \cite{jiang2025medagentbench} \\
MINT: Evaluating LLMs in Multi-turn Interaction with Tools and Language Feedback & \cite{wang2024mint} \\ 
MLAgentBench: Evaluating Language Agents on Machine Learning Experimentation & \cite{pmlr-v235-huang24y} \\
OSWorld: Benchmarking Multimodal Agents for Open-Ended Tasks in Real Computer Environments & \cite{xie2024osworld} \\
SafeAgentBench: A Benchmark for Safe Task Planning of Embodied LLM Agents & \cite{yin2024safeagentbench} \\
SmartPlay: A Benchmark for LLMs as Intelligent Agents & \cite{wu2024smartplay} \\
SOTOPIA: Interactive Evaluation for Social Intelligence in Language Agents & \cite{zhou2024sotopia} \\
SWE-bench: Can Language Models Resolve Real-World GitHub Issues? & \cite{jimenez2024swebench} \\
Terminal-Bench: Benchmarking Agents on Hard, Realistic Tasks in Command Line Interfaces & \cite{merrill2026terminal} \\
The Tool Decathlon: Benchmarking Language Agents for Diverse, Realistic, and Long-Horizon Task Execution & \cite{li2025tooldecathlon} \\
ToolLLM: Facilitating Large Language Models to Master 16000+ Real-world APIs & \cite{qin2024toolllm} \\
TravelPlanner: A Benchmark for Real-World Planning with Language Agents & \cite{xie2024travelplanner} \\
Vending-Bench: A Benchmark for Long-Term Coherence of Autonomous Agents & \cite{backlund2025vending} \\
WebArena: A Realistic Web Environment for Building Autonomous Agents & \cite{zhou2024webarena} \\
WebShop: Towards Scalable Real-World Web Interaction with Grounded Language Agents & \cite{yao2022webshop} \\
WorkArena: How Capable Are Web Agents at Solving Common Knowledge Work Tasks? & \cite{drouin2024workarena} \\
$\tau$-bench: A Benchmark for Tool-Agent-User Interaction in Real-World Domains & \cite{yao2024tau} \\
$\tau^2$-Bench: Evaluating Conversational Agents in a Dual-Control Environment & \cite{barres2025tau} \\
\bottomrule
\end{tabular}}
\caption{\textbf{Agent benchmark research with a trajectory-level evaluation protocol.}} 
\label{tab:survey_2}
\end{table*}

\section{Discussion and Future Work} \label{sec:apdx:disc}
Does solely establishing the agent UQ framework actually help us to reliably infer on the agent's performance on a task? Modern LLMs (especially after post-training) are not well-calibrated~\citep{tian2023just}, so the uncertainty estimates from these ill-calibrated probabilistic models can not be directly used as a reliable performance indicator. Although we present a high-level roadmap for quantifying uncertainty for LLM agent systems, future work should also explore the calibration of LLM agents as well. The connection between calibration and accuracy~\citep{park2020calibrated,oh2024towards} of a predictor shows an exciting future work direction for joint optimization of the agent's problem-solving capability and calibration simultaneously.

Another promising direction for future investigations would be to extend the agent UQ framework to multimodal setups such as graphical user interface agents~\citep{nguyen-etal-2025-gui}, where extra challenges occur, such as many-to-many correspondence between modalities~\citep{chun2025multiplicity}. Defining and establishing a reliable UQ framework for multimodal agents would offer helpful foundations to develop robustness solutions~\citep {oh2025understanding,oh2025visual}, safety~\citep{zhou2025multimodal,chen2025obvious} and more~\citep{shi2025towards}.
\clearpage

\end{document}